\declaretheorem[name=Theorem]{theorem}
\declaretheorem[name=Proposition, sibling=theorem]{proposition}
\declaretheorem[name=Lemma, sibling=theorem]{lemma}
\declaretheorem[name=Corollary, sibling=theorem]{corollary}
\declaretheorem[name=Definition, style=definition]{definition}
\declaretheorem[name=Remark, style=remark]{remark}
\def\eqref#1{equation~\ref{#1}}
\def\1{\bm{1}}
\DeclareMathAlphabet{\mathsfit}{\encodingdefault}{\sfdefault}{m}{sl}
\SetMathAlphabet{\mathsfit}{bold}{\encodingdefault}{\sfdefault}{bx}{n}
\renewenvironment{leftbar}[1][\hsize]
{\MakeFramed{\hsize#1\advance\hsize-\width\FrameRestore}}
{\endMakeFramed}
\crefname{appendix}{appendix}{appendices}
\Crefname{appendix}{Appendix}{Appendices}
\newcommand{\ms}[1]{\left\{\!\!\left\{#1\right\}\!\!\right\}}
\title{On the Rademacher Complexity of Graph Neural Networks: Unifying Expressivity and Geometry}
\author{Martin Carrasco$^*$ \\
University of Fribourg \\
\texttt{martin.carrrascocastaneda@unifr.ch}  \\
    \And 
    Caio F. Deberaldini Netto$^*$\\
Johns Hopkins University\\
    \texttt{cnetto1@jh.edu} \\
    \And
    Vahan A. Martirosyan$^*$ \\
    Université Paris-Saclay \\
    \texttt{vahan.martirosyan@centralesupelec.fr}
    \And
    Ehimare Okoyomon$^*$\\
Technical University of Munich \\
    \texttt{e.okoyomon@tum.de}  \\
    \And 
Caterina Graziani \\
University of Siena\\
\texttt{caterina.graziani2@unisi.it} \\
}
\begin{document}

\maketitle

\begin{abstract}
	Understanding the interplay between generalization, expressivity, and the geometry of the input space is a central challenge in graph learning. The expressivity of Graph Neural Networks (GNNs) is typically characterized through their correspondence with graph invariants, such as those from the Weisfeiler-Leman (WL) hierarchy. While more expressive GNNs can distinguish a richer set of graphs, they are also associated with weaker generalization guarantees. Previous works have addressed this trade-off using the VC dimension, a purely combinatorial measure, independent of the training data. In this work, we adopt a data-dependent measure of generalization, the empirical Rademacher complexity, and derive tight generalization bounds that jointly consider the expressive power of GNNs and the geometry of the underlying input space. Specifically, any graph invariant that upper-bounds a GNN's expressive power partitions the input space into equivalence classes, and we show that the empirical Rademacher complexity is controlled by the distribution of training samples across these classes. Moving beyond discrete partitions, we incorporate the geometry of the input space and derive covering-number bounds under Lipschitz continuity, showing that the complexity cost can be mitigated when the hypothesis class remains smooth over the data geometry. In addition, we prove that the empirical Rademacher complexity is Lipschitz continuous with respect to the Wasserstein distance between empirical measures supported on different datasets. This yields robustness and generalization guarantees under sampling variability. Importantly, our framework is not restricted to message-passing GNNs or WL, but extends to arbitrary GNN architectures and their associated invariants, providing a step toward a unified theory of GNN generalization.
\end{abstract}
\section{Introduction}

Graph Neural Networks (GNNs) \citep{scarselli2008graph,gilmer2020message} have achieved remarkable success across domains ranging from social networks and knowledge graphs to computational chemistry \citep{zhou2020graph}. This empirical progress has motivated an extensive theoretical effort to characterize their \emph{expressive power}, most commonly measured by the ability to distinguish non-isomorphic graphs
\citep{morris2019weisfeiler,xu2018powerful,huang2021short}. A now-classical line of work
establishes a tight correspondence between GNN architectures and the Weisfeiler--Leman (WL)
hierarchy of graph isomorphism tests: message-passing GNNs (MPGNNs) are at most as expressive as the $1$-dimensional WL test ($1$-WL) \citep{xu2018powerful,JMLR:v24:22-0240},
and analogous correspondences have since been established for a broad spectrum of more expressive variants. Expressivity, however, speaks only to what a model \emph{can} represent on the training distribution; it is largely silent on the arguably more fundamental question of how well it \emph{generalizes} to unseen graphs \citep{vasileiou2025survey}.

A recent line of work has begun to bridge this gap by connecting expressivity to classical
generalization theory. \citet{morris2023wl} established the first direct link between GNN
expressivity and VC dimension, showing that for GNNs with piecewise-linear activations the
VC dimension is controlled by the number of graphs distinguishable by $1$-WL, yielding
bounds of the form $\mathcal{O}(P \log(puP))$, where $P$ is the number of parameters, $u$
the number of node colors, and $p$ the number of activation pieces. \citet{dinverno2025vcgnn}
extended this analysis to Pfaffian activations such as $\tanh$ and \textit{sigmoid}, with bounds
likewise governed by the maximum number of node colors per graph. Further related connections between expressivity and generalization are discussed in
\Cref{app:additional_RW}.

An orthogonal and more data-dependent approach studies the Rademacher complexity of GNN
hypothesis classes. \citet{garg2020generalization} provided the first of such bounds for
MPGNNs, carefully exploiting their local permutation invariance. While tighter than
VC-based guarantees, these bounds depend solely on architectural parameters and do not
expose how expressivity---as captured by WL-style colorings---influences the complexity.
A complementary line of work has advanced a \emph{metric} view of GNN expressivity,
replacing discrete WL equivalence with pseudo-metrics on graphs. \citet{chuang2022tree}
introduce the Tree Mover's Distance (TMD), a hierarchical optimal-transport pseudo-metric
under which MPNNs are provably Lipschitz, yielding stability and generalization bounds;
\citet{chen2022weisfeiler} define a polynomial-time Weisfeiler--Leman distance that
quantitatively refines the $1$-WL equivalence and bounds the Gromov--Wasserstein
distance; and \citet{boker2023fine} extend $1$-WL and MPNNs continuously to graphons,
characterizing their separating power through a tree pseudo-metric and thereby turning
the binary WL-distinguishability question into a quantitative one. Closer to our setup,
\citet{levie2023graphon} and \citet{maskey2022generalization} leverage graphon-based
metrics---in particular a cut distance under which MPNNs are Lipschitz---to derive
generalization bounds via covering numbers.
Most recently, \citet{maskey2026graph} formalize the empirical observation that
generalization hinges on \emph{structure--label correlation}: they introduce a family of
$\zeta$-Tree Mover's Distances ($\zeta$-TMDs), parametrized by an arbitrary graph
invariant $\zeta$ (e.g., degree distributions or $k$-WL colors) that encodes a prescribed
level of expressivity, and study a regime in which labels correlate with a fixed
$\zeta$-TMD. Within this framework, they derive data-dependent bounds that decompose the
generalization gap into a capacity term and a structural-similarity term measured by the
$\zeta$-TMD between training and test graphs, showing that greater expressivity improves
generalization only when it tightens the structure--label alignment.
Despite this progress, a unified account that simultaneously (i) ties the empirical
Rademacher complexity of GNNs to the equivalence classes induced by an \emph{arbitrary}
expressivity-bounding graph invariant, (ii) refines this combinatorial picture through
the geometry of the input space under arbitrary pseudo-metrics via covering numbers, and
(iii) provides stability guarantees under sampling variability, has so far been missing.

Our work directly addresses the gap identified above by relating the expressive power of
GNNs, as captured by an arbitrary graph coloring invariant, to their empirical
Rademacher complexity, and by extending the analysis to the underlying input geometry.
We thereby provide a principled explanation for the observed tension between expressivity
and generalization, and for why this tension is often milder in practice than worst-case
analyses suggest. Our main \emph{contributions} are:

\paragraph{(i) Expressivity-aware complexity bounds.} We derive tight bounds on
the empirical Rademacher complexity of GNNs governed by the number of equivalence
classes induced by an arbitrary expressivity-bounding graph invariant (e.g., the WL
hierarchy), together with how the training sample is distributed across these classes. Moreover, we refine the classical Dudley entropy integral, collapsing the effective dimensionality of the output space to the number of distinguishable graphs.

\paragraph{(ii) Geometric refinement via covering numbers.} Moving beyond discrete
partitions, we show that under Lipschitz continuity with respect to an arbitrary
pseudo-metric on graphs, the Rademacher complexity admits data-dependent bounds in
terms of covering numbers of the input space, formalizing how smoothness of the
hypothesis class over the data geometry can offset the cost of high expressivity.

\paragraph{(iii) Stability under sampling variability.} We prove that the empirical
Rademacher complexity is Lipschitz continuous with respect to the Wasserstein
distance between empirical dataset measures, yielding generalization guarantees under
resampling and distribution shift of the training set.

Our framework applies to arbitrary GNN architectures and their associated invariants,
going beyond the standard message-passing / $1$-WL pairing and offering a unified view of
the interplay between expressivity, generalization, and data geometry.

\section{Notation and preliminaries}\label{sec:preliminaries}
For $n \geq 1$, let $[n] := \{1, 2, \dots, n\}$. We use $\ms \ldots$ to denote multisets, i.e., the generalization of sets allowing for multiple instances of each of their elements.
\paragraph{Graphs.} A graph $G=(V, E)$ is a pair with finite set of vertices or nodes $V$ and edges \mbox{$E\subseteq\{\{u, v\} \subseteq V\,\vert\, u \neq v\}$}. For ease of notation, we denote the edge $\{u, v\}$ in $E$ by $(u, v)$ or $(v, u)$.  If not otherwise stated, we set $n := |V|$, and the graph is of order $n$. Let $\mathcal{N}(v)$ be the \emph{neighborhood} of a node $v \in \mathcal{V}$, i.e. the set of all nodes adjacent to $v$, and $d(v)$ the \emph{degree} of a node $v \in \mathcal{V}$, i.e., the number of neighbors $|\mathcal{N}(v)|$. An \textbf{attributed graph} $G =(V, E, \alpha)$ is a triple with a graph $(V,E)$ and node-attribute function $\alpha: V \to A$, where $A$ is a a finite subset of $\mathbb{R}^{d}$, for some $d > 0$. We consider the space of finite, simple, undirected, attributed graphs, denoted by $\mathcal{G}$.

\paragraph{Metric spaces.} A \emph{pseudo-metric space} is an ordered pair $(\mathcal{X}, \mathrm{d})$ where $\mathcal{X}$ is a set and \mbox{$\mathrm{d} \colon \mathcal{X} \times \mathcal{X} \to \mathbb{R}_{\geq 0}$} is a function satisfying the following properties, for any $x,y,z \in \mathcal{X}$:  
\begin{inparaenum}[\bfseries I.]
    \item \mbox{$\mathrm{d}(x,x) = 0$},
    \item $\mathrm{d}(x,y) = \mathrm{d}(y,x)$,
    \item $\mathrm{d}(x,y) \leq \mathrm{d}(x,z) + \mathrm{d}(y,z)$.
\end{inparaenum}
Additionally, if \mbox{$\mathrm{d}(x,y) =0 \implies x =y$}, then the pair $(\mathcal{X}, \mathrm{d})$ is a \emph{metric space}.
For $x\in\mathcal{X}$, we denote the \emph{closed ball} of radius $\varepsilon$ centered on $x$ as $B_{\varepsilon}(x) = \{ y \mid \mathrm{d}(x, y) \leq \varepsilon \}$.\\ 
A set $\{x_1,\ldots,x_n\}\subseteq \mathcal{X}$ is an $\varepsilon-$cover of $K\subseteq\mathcal{X}$ if $K\subseteq\bigcup\limits_{i=1}^n B_{\varepsilon}(x_i)$.
The \emph{covering number} of $K$ is the minimal number of closed balls needed to \emph{cover} $K$. We denote it as
\begin{equation}
    \mathcal{N}(K, \mathrm{d}, \varepsilon) :=\min \{ n \mid \text{ there exists an } \varepsilon\text{-cover of cardinality } n \}.
\end{equation}
Note that the balls of the cover do not need to be disjoint, i.e. points can be in the intersection of different balls. However, the covering number provides a direct way to construct a partition of $K$ as $\{C_1, \dots, C_p\}$ in the following way:
\begin{equation*}
    C_i = \{ x \in K \mid \mathrm{d}(x, x_i) = \min_{j \in [p]} \mathrm{d}(x, x_j) \},\quad  \text{with $p=\mathcal{N}(K, \mathrm{d}, \varepsilon)$.}
\end{equation*}
To break the ties, we take the smallest $i$ in the above. We treat the space of all graphs $\mathcal{G}$ as a pseudometric-space, assuming the existence of a pseudo-metric which lets us measure the distance between elements.

\paragraph{Continuity on metric spaces.} Let $(\mathcal{X}, \mathrm{d}_{\mathcal{X}})$ and $(\mathcal{Y}, \mathrm{d}_{\mathcal{Y}})$ be two pseudometric spaces. A function $f\colon \mathcal{X} \to \mathcal{Y}$ is $\lambda$-Lipschitz continuous if for any $x,x' \in \mathcal{X}$
\begin{equation}
    \mathrm{d}_{\mathcal{Y}} (f(x), f(x')) \leq \lambda \cdot \mathrm{d}_{\mathcal{X}}(x, x').
\end{equation}
Let $\mathcal{F} := \{f_\theta \mid{\theta} \in \Theta\}$ be a parametrized family of functions $f_{\theta} \colon \mathcal{X} \to \mathbb{R}$ such that $f_{\theta}$ is Lipschitz with constant $\lambda_{\theta} < \infty$. We say that $\mathcal{F}$ is Lipschitz with \emph{certificate} $\lambda<\infty,$ if $\lambda$ is an upper-bound on the set of Lipschitz constants such that $\forall \theta \in \Theta: \lambda_{\theta} \leq \lambda$.

\paragraph{Message passing graph neural networks.}\label{prelim:gnn}
Message-passing GNNs (MPGNNs) learn real-valued vectors, called \emph{embeddings}, for each node by iteratively updating their features based on aggregated information from their neighbors. The embedding $h^{(\ell)}(v)$ for node $v$ at layer $\ell$ is computed as:
\begin{equation}\label{eq:gnn}
    h^{(\ell)}(v) = \texttt{COMBINE}^{(\ell)}\left( h^{(\ell-1)}(v), \texttt{AGGREGATE}^{(\ell)}\left(\ms{h^{(\ell-1)}(u)}_{ u \in \mathcal{N}(v)}  \right)\right).
\end{equation}
where $\texttt{AGGREGATE}^{(\ell)}$ and $\texttt{COMBINE}^{(\ell)}$ are differentiable parameterized functions, e.g. neural networks, and $\texttt{AGGREGATE}^{(\ell)}$ is  permutation invariant over multisets.
After $L$ layers, we obtain the final node embedding, which we denote as $h^{(L)}(v)$. For graph-level tasks, these are aggregated into a graph representation $h^{(L)}(G)=\texttt{READOUT}(\ms{ h_{L}(v) }_{v\in V}),$
where $\texttt{READOUT}$ is a differentiable parameterized function.

\paragraph{Expressivity-bounding graph invariant.} \label{prelim:expressivity_bounding_graph_invariant}
{ 
A \emph{graph invariant} is a function $ T: \mathcal{G} \to \mathcal{C} $ that assigns a label to each graph such that if $G$ and $H$ are isomorphic, it holds that $ T(G) = T(H) $. Any such invariant induces and equivalence relation $\equiv_T$ on $\mathcal{G}$ such that equivalent graphs are assigned the same label.
The expressive power of MPGNNs is often studied via their capability to distinguish non-isomorphic graphs. To test whether two graphs are isomorphic, a graph invariant function is applied to both graphs. If the values of the two graphs differ, the graphs are non-isomorphic. If the values are the same, the algorithm is inconclusive, meaning that the two graphs may be, but are \emph{not guaranteed to be}, isomorphic. 
}

Let $\mathcal A$ be a GNN architecture with associated hypothesis class $\mathcal{F_A}$. We say that a graph invariant $T$ upper bounds the expressive power of $\mathcal A$, denoted by $\mathcal A\preceq   T$, if for all $G,H \in \mathcal G,$ 
 \mbox{$T(G)=T(H)  \implies  f(G)=f(H)$},
for every $f \in \mathcal F_{\mathcal A}$.
Equivalently, $\mathcal A$ cannot distinguish graphs that are identified by $T$.

\paragraph{1-Weisfeiler-Leman.}
Research has shown that MPNNs are at most as powerful as the 1-Weisfeiler-Leman (1-WL) test, a well-known isomorphism heuristic \cite{xu2018powerful, morris2019weisfeiler, wl68}.  {Thus, the 1-WL is an expressivity-bounding graph invariant for MPNNs. Throughout the paper, we adopt this correspondence between MPGNNs and 1-WL as a running example to support the exposition and enhance readability. Nonetheless, our results hold in general for arbitrary GNN architectures along with the graph invariant that upper bounds their expressive power.}

The 1-WL test  is a \emph{color refinement algorithm}, namely a mapping that assigns to each graph $G=(V,E)$ a function $ c^{\mathrm{WL}}: V \to \mathcal{C}$, where $\mathcal{C}$ is a set of node labels, or colors. It works by partitioning the nodes of $G$ into equivalence classes, where equivalent nodes are assigned the same color based on their neighborhood structure. At each iteration $\ell$, the color 
{$c^{(\ell)}(v)$ }
of a node $v$ is updated by hashing its previous color with the multiset of its neighbors' colors:
{ 
\begin{equation}\label{eq:wl-update-fn}
     c^{(\ell)}(v)= \text{\texttt{HASH}}\Bigl(c^{(\ell-1)}(v), \ms{c^{(\ell-1)}(u)}_{ u \in \mathcal{N}(v)}\Bigr).
\end{equation}
}This process continues until the partitioning is stable. We denote by 
{ 
$c^{\mathrm{WL}}(v)$}
the color of node $v$ at convergence of the partitions. That is, 
{ 
$c^{\mathrm{WL}}(v):=c^{(L)}(v)$ }
where $L$ is the first iteration after which the partition no longer changes.
We define the \emph{color} of graph $G=(V,E)$ as the multiset of colors of its nodes:
\begin{equation}
    T^{\mathrm{WL}}(G) = \ms{c^{\mathrm{WL}}(v)}_{v\in V}.
\end{equation}

Every color refinement algorithm like $1-$WL induces a graph invariant $T^{\mathrm{WL}}$ acting over the graph $G$ by aggregating the colors of its nodes into a multiset. More concretely, given two graphs $G=(V,E)$ and $G'=(V',E')$, we can define the equivalence relation $\overset{\mathrm{WL}}\equiv$ induced by $T^\mathrm{WL}$ on graphs as: 
$$G \; \overset{\mathrm{WL}}\equiv \; G' \Leftrightarrow T^{\mathrm{WL}}(G) = T^{\mathrm{WL}}(G') \Leftrightarrow \ms{c^{\mathrm{WL}}(v)}_{v\in V}=\ms{c^{\mathrm{WL}}(u)}_{u\in V'}.$$
\paragraph{Generalization.} We briefly review the main definitions in the theory of Rademacher complexity. We invite the reader to consult \citet{Mohri2012-nv} for a comprehensive treatment of the topic. Let $S=\{(G_i,y_i)\}_{i\in [m]}\sim \mathcal{D}^m $ be a dataset composed of $m$ i.i.d. samples which we assume are drawn from an underlying distribution $\mathcal{D}$ on $\mathcal{G}\times \mathcal{Y}$, with $\mathcal{Y}=\{-1,+1\}$. Sometimes we subsume the set $\mathcal{Y}$, writing $S= \{G_1, \dots G_m\}$. 
For any fixed GNN architecture $\mathcal{A}$, we denote by
$\mathcal{F_A}$ the class of possible {graph-level} functions that can be learned by this GNN.

Given a loss function $\ell$ that measures the prediction error, we define for each $f\in\mathcal{F_A}$ the empirical and true (or population) risk, respectively, by
\begin{equation}
{L}_{S}(f) = \frac{1}{m}\sum_{j=1}^{m}\ell (f(G_j),y_j), \quad L(f)= \mathbb{E}_{(G,y)\sim\mathcal{D}}[\ell(f(G),y)].
\end{equation}
The generalization gap is defined as the difference between the true and empirical risk, denoted by $e^{gen}(f, S) := L(f) - L_S(f)$, and it is bounded by the model complexity. To quantify complexity, let $\sigma_{1},\ldots, \sigma_{m}$ be independent Rademacher variables and define the \emph{empirical} Rademacher complexity
\begin{equation}
{\mathcal{R}}_{S}(\mathcal{F_A}) = \mathbb{E}_{\sigma} \left[ \sup_{f\in\mathcal{F_A}} \frac{1}{m}\sum_{j=1}^m \sigma_j f_\theta(G_{j})\right],
\end{equation}
with \emph{population} counterpart 
\begin{equation}
\mathcal{R}_{m}(\mathcal{F_A}) = \mathbb{E}_{S\sim \mathcal{D}^{m}}\left[{\mathcal{R}}_{S}(\mathcal{F_A})\right].
\end{equation}

Rademacher complexity measures how well a function class $\mathcal{F_A}$ can correlate with random noise. High Rademacher complexity indicates that there exists a function in $\mathcal{F_A}$ that is potentially "overfitting" the labels.  

\section{Generalization bounds: From equivalence classes to metrics}For a fixed GNN architecture $\mathcal A$, let $\Theta_{\mathcal A}$
denote the set of admissible parameter values and define its hypothesis class as
$$\mathcal F_{\mathcal A}=\{f_\theta:\mathcal G\to[-1,1]\mid\theta\in\Theta_{\mathcal A}\}.$$
Here, $f_\theta(G)$ denotes the graph-level output of the GNN with parameters
$\theta$ on input graph $G$. We extend here the definition of MPGNNs provided in \Cref{prelim:gnn} to describe how the graph-level output $f(G)$ is computed, thereby specifying the hypothesis class $\mathcal{F_A}$ under consideration.
Let $h^{(L)}(G)\in \mathbb{R}^d$ be the global embedding of the graph $G$, obtained by combining the node embeddings using a \texttt{READOUT} function such as \textit{sum}, \textit{max}, or \textit{mean}. Then, the GNN output $f_\theta(G)$ is computed by applying an activation function (e.g., tanh) $\psi(\cdot)$
to the linearly transformed graph embedding:\begin{equation}f_\theta(G) = \psi(\bm{\beta}^\intercal h_L(G)) \in [-1,1],\end{equation}
where $\bm\beta \in \mathbb{R}^d$ is a trainable parameter.

The generalization gap of the class $\mathcal{F_A}$ can be controlled by its empirical Rademacher complexity on the training sample $S$, through a standard learning theory argument.
Specifically, combining the
Rademacher generalization bound of \cite[Theorem 3.3]{Mohri2012-nv} with
Talagrand's contraction lemma \cite{ledoux2011probability}  we obtain the following bound for any loss $\ell:[-1,1]^2\to\mathbb R$ that is
$\gamma$-Lipschitz with respect to function output.

\begin{restatable}{proposition}{radLossBound}
Let $\ell$ be a Lipschitz loss function, of constant $\gamma$.
	For any $\delta > 0$, with probability at least $1 - \delta$, the following holds for all $f \in \mathcal{F_A}$:
	\begin{equation}
		L(f) \le {L}_{S}(f)  +
		2\gamma\, {\mathcal{R}}_{S}(\mathcal{F_A})
		+3\sqrt{\frac{\ln(2/\delta)}{2m}}.
	\end{equation}
\end{restatable}
Some examples of such Lipschitz loss functions in the context of graph classification are the logistic loss ($\log$ \textit{loss}), the cross-entropy (CE) when applied to the output of a \textit{softmax} layer \cite{mao2023cross} or to the output of a logistic function when its input is bounded (check   \cref{app:bound_l_ce_lipsc}), and a margin loss \cite{garg2020generalization}. Particularly for this work, since our hypothesis class is $\mathcal{F_A}=\{f_\theta \,\colon\mathcal{G}\to[-1,1] \mid \theta\in\Theta_{\mathcal{A}}\}$, we can either use directly the latter loss with an activation function that gives outputs in the interval $[-1, 1]$ (e.g., \textit{tanh}), or combine the GNN's output with a linear transformation $[-1, 1] \to [0, 1]$ and use one of the other two loss functions (see \cref{app:bound_gen_l_ce_lipsc}).

\subsection{Rademacher complexity through the lens of graph invariants}\label{sec:rc-meets-colors}
In this section we focus our attention on the connection between expressivity and generalization. While GNN expressivity is traditionally tied to the WL test, our framework applies more broadly: the application of any graph invariant $T$ on $\mathcal{G}$ has a direct relationship with the Rademacher complexity of the hypothesis class realized by GNNs bounded in expressivity by $T$. To begin, we introduce how a graph invariant acts on a set of graphs. Applying $T:\mathcal{G}\to \mathcal{C}$ on a sample \mbox{$S=\{G_1, \dots, G_m\}\subset\mathcal{G}^m$} results in a partition of $S$ in $p$ equivalence classes $C_1, \dots, C_p$. Each $C_j:=\{G\in S: T(G)=c_j\}$ contains all graphs assigned the same label $c_j\in \mathcal{C}$.  For intuition, if $T$ represents the standard 1-WL algorithm, the labels $c_j$ correspond to graph colors. We define the empirical distribution induced by $T$ on $S$ as:
\[
	\mu^T_S:\mathcal{C}\to[0,1], \qquad
\mu^T_S(c_j):=\frac{|C_j|}{|S|}.
\]
When $S$ and $T$ are clear from the context, we simply write
$\mu_j:=\mu^T_S(c_j)$, and interpret $\mu_j$ as the proportion of graphs sharing the same label $c_j$ in $S$.

	{
		This partition imposes structural constraints on the function class $\mathcal{F_A}$, since $\mathcal{A} \preceq T$ (see \Cref{prelim:expressivity_bounding_graph_invariant}).
	}
Consequently, any function implementable by the architecture must be constant over the equivalence classes $C_j$, which limits its possibility to overfit arbitrary labels, since not all labelings are compatible with the partitioning.
In the following, we bound the Rademacher complexity of a function class $\mathcal{F_A}$ in terms of the empirical distribution $\mu_S^T$ induced by $T$ on $S$. While the upper bound on this complexity follows directly from the structural constraint $\mathcal{A} \preceq T$, which enforces constancy within each equivalence class $C_j$, to derive a matching lower bound we require that the hypothesis class retains complete flexibility in assigning arbitrary labels to each distinct partition. We formalize this theoretical upper bound of flexibility as follows.

\begin{definition}[Maximally expressive]
	Let $T: \mathcal{G} \to \mathcal{C}$ be a graph invariant and let $S$ be a finite sample partitioned by $T$ into $p$ equivalence classes $C_1, \dots, C_p$. Let $\mathcal{A}\preceq T$. We say that $\mathcal{F}_{\mathcal{A}}$ is \emph{maximally expressive} if it contains functions that can perfectly realize any arbitrary labeling over the quotient space induced by $T$. Formally, for any sequence of labels $y \in \{-1, 1\}^p$, there exists a function $f \in \mathcal{F}_{\mathcal{A}}$ such that:
	\begin{equation}
		f(G) = y_j \quad \forall G \in C_j, \quad \forall j \in [p].
	\end{equation}
\end{definition}
Building on this definition, the following proposition establishes that the empirical Rademacher complexity is controlled by the distribution of graphs in the equivalence classes, and is tightly bounded when the hypothesis class is maximally expressive.

\begin{restatable}[Bounds based on partitions]{proposition}{upcolor}\label{prop:upper-lower_bound}
	Let $S = \{G_1, \dots, G_m\}$ be a sample of $m$ graphs, partitioned into $p$ disjoint sets $C_1, \dots, C_p$ by a graph invariant $T$.
	Let $\mathcal{F_A}$ be a class of functions $f_\theta:\mathcal{G}\to[-1,1]$ such that $\mathcal{A} \preceq  T$. Then, the empirical Rademacher complexity of $\mathcal{F_A}$ on $\mathcal{S}$ is upper bounded by:
	\begin{equation*}
		\mathcal{R}_{S}(\mathcal{F_A}) \leq {\sum\limits_{j=1}^p  \sqrt\frac{\mu_j}{m}},
	\end{equation*}
	where $\mu_j:=\frac{|C_j|}{m}$.

	Moreover, if $\mathcal{F_A}$ is \emph{maximally-expressive}, then the empirical Rademacher complexity of $\mathcal{F_A}$ is tightly bounded from both sides\begin{equation*}
		\sum\limits_{j=1}^p\sqrt{\frac{\mu_j}{2m}} \leq \mathcal{R}_{S}(\mathcal{F_A}) \leq {\sum\limits_{j=1}^p \sqrt\frac{\mu_j}{m}}.
	\end{equation*}
\end{restatable}

The proof of \Cref{prop:upper-lower_bound} can be found in  \Cref{app:proof_color}.
\Cref{prop:upper-lower_bound} shows that the bound on the Rademacher complexity is tight if the class contains maximally-expressive functions. In this case, the empirical Rademacher complexity admits both upper and lower bounds of the same order, i.e., $\mathcal{R}_S(\mathcal{F_A})=\mathcal O\left(\sum_j\sqrt{{\mu_j}/{m}}\right)$. In the case where $\mathcal{F_A}$ \textit{does not} contain maximally-expressive functions, the upper bound still holds. We provide a more general upper bound for classes of real-valued functions $f:\mathcal{G}\to\mathbb{R}$ in  \Cref{app:real-valued_functions_bound}.

This bound reveals three key insights regarding how the distribution of equivalence classes and the sample size impact generalization. First, for a fixed number of equivalence classes, $p$, the bound is highly sensitive to the class frequencies $\mu_j$. Because the square root function is strictly concave, the upper bound is maximized when the dataset is uniformly distributed across all classes (i.e., $\mu_j = 1/p$ for all $j$). In this uniform regime, the upper bound reduces to $\sqrt{p/m}$. Conversely, if the dataset is highly disproportionate (i.e. dominated by a few dense equivalence classes) the Rademacher complexity is significantly reduced, leading to tighter generalization guarantees.

Second, the bound explicitly links generalization error to the structural heterogeneity of the dataset. Assuming a uniform distribution, the complexity scales directly with $\sqrt{p}$. Therefore, a highly heterogeneous dataset with a massive number of distinct equivalence classes (or a highly expressive function class $\mathcal{F_A}$ that induces them) will incur a larger generalization penalty than a more homogeneous one.

Finally, the dependence on $1/\sqrt{m}$ matches standard learning-theoretic intuition: increasing the sample size tightens the bound regardless of architectural expressivity.

We apply the same intuition to improve the  Dudley entropy integral bound (see \cref{app:dudley}) on the empirical
Rademacher complexity \citep{bartlett2017spectrally}) by incorporating the number of graph classes in the inequality. Let $\mathcal{N}(\mathcal{F_A}_{|S},\varepsilon, ||\cdot||_{2})$ be the covering number of $\mathcal{F_A}_{|S}$ \footnote{The notation $\mathcal{F_A}_{|S}$ denotes the \emph{restriction} of the class of functions $\mathcal{F_A}$ to functions with domain in $S$.} at radius $\varepsilon$ under $ ||\cdot||_{2}$ (the $\ell_2$ norm).  Then:
\begin{restatable}[Covering using Dudley's entropy]{proposition}{dudleycoveringA}
	\label{prop:dudley_improved}
Let $S = \{G_1, \dots, G_m\}$ be a sample of $m$ graphs, partitioned into $p$ disjoint sets $C_1, \dots, C_p$ by a graph invariant. Let $\mathcal{F_A}$ be a class of functions $f_\theta: \mathcal{G} \to [-1, 1]$ whose output is the same on each graph of a fixed $C_j$ (i.e. $\mathcal{A}\preceq T$). Assume $\bm{0} \in \mathcal{F_A}$. The empirical Rademacher complexity of $\mathcal{F_A}$ on $S$ is bounded by:
	\begin{equation} \mathcal{R}_S(\mathcal{F_A}) \le \inf_{\alpha>0}\left(\frac{4\alpha\sqrt{p}}{m}+\frac{12}{m}\int_{\alpha}^{\sqrt{m}}\sqrt{\log\mathcal{N}(\mathcal{F_A}_{|S},\varepsilon, ||\cdot||_{2})}d\varepsilon\right) \end{equation}
	where the bound is reduced due to the $p$-dimensional structure of the output space $\mathcal{F_A}_{|S}$.
\end{restatable}

The proof can be found in ~\Cref{app:proof-dudley} and is very similar to the proof of Lemma A.5 from \cite{bartlett2017spectrally}.
{Relative to prior work \cite{bartlett2017spectrally}, the first term in the bound is tighten from $1/\sqrt{m}$ to $\sqrt{p}/m$, yielding a concrete improvement and an explicit characterization in terms of graph colors.} Furthermore, this bound applies generally to all GNN architectures but can be further refined for a specific function class by bounding the covering number of $\mathcal{F_A}_{|S}$. We refer the interested reader to \citet{garg2020generalization} for an example of such a covering number bound for message-passing GNNs.

\subsection{A geometric view}

While the previous section established generalization bounds based on discrete partitions induced by graph invariants, this framework is inherently limited by its categorical nature. The Weisfeiler-Lehman (WL) algorithm, while a foundational paradigm for GNN expressivity, typically yields a binary classification: two graphs are either equivalent under the refinement process or they are not. This discrete perspective does not naturally account for the \emph{degree} of structural similarity between non-isomorphic graphs, nor does it easily reconcile structural comparisons across graphs of varying sizes. To address these limitations and provide a more nuanced characterization of GNN capacity, we transition our analysis from discrete equivalence classes to a geometric perspective. By considering the geometry of the input space through the lens of pseudo-metrics, we can capture finer-grained structural relationships that discrete partitions overlook. This shift follows recent trends in the literature that utilize continuous measures of similarity, such as the \emph{Tree Mover's Distance} (TMD)\citep{chuang2022tree}, to analyze the stability and generalization of GNNs \citep{vasileiou2024covered}.

In this section, we extend our Rademacher analysis to arbitrary pseudo-metric spaces with respect to which our hypothesis class is Lipschitz continuous, as defined in \Cref{sec:preliminaries}. For example MPGNNs are Lipschitz continuous with respect to standard WL-equivalent metrics, such as the TMD \cite{vasileiou2024covered, sverdlov2025fswgnn}.

In the discrete regime, the invariant imposes an \emph{exact} structural bottleneck (i.e. identical outputs for indistinguishable graphs). In the continuous regime, this hard constraint is replaced by a smoothness condition, the Lipschitzianity of the class, realizing the alignment between the model's output and the input geometry. In cases where there is a strong correlation between the geometry of the input space and the task, alignment of the hypothesis class is relevant for bounding the generalization error \citep{maskey2026graph}. \\

\begin{remark}
	Note that by the assumption of Lipschitz continuity of $\mathcal{F_A}$ we implicitly enforce the constraint that if $T$ distinguished two graphs,  $\mathrm{d}$ must be \textit{at least as expressive} as $T$ since $|f_\theta(G)-f_\theta(G')|>0$ implies that  $\mathrm{d}(G,G')>0$.
\end{remark}

To obtain an upper bound in terms of the geometry of the input only the Lipschitz assumption is \emph{sufficient}. However, to achieve tightness and a lower bound, a stronger agreement criteria between $\mathrm{d}$ and $\mathcal{F_A}$ is needed. We formalize this agreement through the definition $T$-equivalent metric, a generalization of a WL-equivalent metric, as formulated by \citet{sverdlov2025fswgnn}.

\begin{definition}[$T$-equivalent metric]\label{def:T-equivalent}
	Let $T$ be a graph invariant. A pseudo-metric $\mathrm{d}:\mathcal{G}\times \mathcal{G}\to \mathbb{R}_{\geq 0}$ is $T$-equivalent if for all pairs $G,G'\in \mathcal{G}$,   $\mathrm{d}(G,G')=0 \iff G\equiv_T G'$.
\end{definition}

\begin{restatable}[Bound on metric space of input]{proposition}{upradmetricB}
	Let $(\mathcal{G}, \mathrm{d})$ be a pseudo-metric space of graphs $\mathcal{G}$ and $S \subset \mathcal{G}$ be a finite subset. Let $\mathcal{F_A}$ be a Lipschitz class of parametrized functions  \mbox{$f_\theta \,\colon \mathcal{G}\to [-1, 1] $}.Let \mbox{$p_\varepsilon:=\mathcal{N}(S,\varepsilon, \mathrm{d}_{\mathcal{G}})$} denote the covering number of $S$ with radius $\varepsilon$, with respect to the metric $\mathrm{d}$.
	Then, the empirical Rademacher complexity of $\mathcal{F_A}$ on ${S}$ is upper bounded by
	\begin{equation}
		\mathcal{R}_{S}(\mathcal{F_A}) \leq  \sum_{j=1}^{p_{\varepsilon}} \sqrt{\frac{\mu_j^{\varepsilon}}{m}} + \varepsilon \lambda
	\end{equation}
where  $\lambda$ is the Lipschitz certificate of $\mathcal{F_A}$ and
$\mu_j^{\varepsilon}$ is the proportion of the graphs in the $j$th ball of the $\varepsilon$-covering of $S$. \\
	Furthermore, if $\mathrm{d}$ is a $T$-equivalent pseudo-metric and $\mathcal{F_A}$ is maximally expressive, the complexity is tightly bounded from both sides:
	\begin{equation}
		- \varepsilon \lambda +\sum_{j=1}^{p_{\varepsilon}} \sqrt{\frac{\mu_j^{\varepsilon}}{2m}} \leq \mathcal{R}_{S}(\mathcal{F_A}) \leq  \sum_{j=1}^{p_{\varepsilon}} \sqrt{\frac{\mu_j^{\varepsilon}}{m}} + \varepsilon \lambda
	\end{equation}
\end{restatable}
One term in the bound corresponds to the Rademacher complexity obtained by approximating $S$ with its $\varepsilon$-cover, while the other term $\varepsilon\lambda$ accounts for the approximation error incurred by replacing each sample point with its cover representative. This geometric formulation naturally subsumes the discrete case. If the space is equipped with the discrete metric induced by the invariant (i.e., $\mathrm{d_T}(G, G') = 0$ if $G \equiv_T G'$, and $1$ otherwise), for any $\varepsilon<1$ the $\varepsilon$-balls collapse exactly onto the equivalence classes of $T$. In this regime, the approximation error vanishes, the covering number $p_\epsilon$ matches the number of partitions $p$, and we perfectly recover the exact discrete bounds from
\cref{prop:upper-lower_bound}.

\textcolor{blue}{markdown comparison table, commented}

\paragraph{Recovering tightness via quotient spaces.}The bounds presented above are tight when the pseudo-metric $\mathrm{d}$ is aligned with the invariant $T$ bounding the architecture. If $d$ can distinguish graphs that the invariant T identifies, namely if $\mathrm{d}(G,G')>0$ while $T(G)=T(G')$, then the covering number $\mathcal N(S,\varepsilon,d)$ may overestimate the effective complexity of the hypothesis class. In this case, we can project the metric space onto the quotient space $S/{\equiv_T}$ induced by the invariant.
Equipping this quotient space with a valid pseudo-metric $\bar d$ allows the complexity bound to depend on the covering number of the quotient space, $\mathcal N(S/{\equiv_T},\varepsilon,\bar d)$, rather than on that of the original sample. This removes distinctions that are invisible to the architecture and can therefore yield a sharper bound. Note that this quotient formulation represents the most general version of our framework: it naturally subsumes the discrete partitioning and the $T-$equivalent metric cases, where $T-$equivalent graphs are already collapsed to distance zero, rendering the quotient operation implicit. We defer the formal construction and the corresponding bounds to \Cref{app:quotient}.

\section{Generalization and stability across datasets}\label{sec:stability}

The previous results bound the Rademacher complexity of a class of functions $\mathcal{F_A}$ on a fixed sample $S$. We now turn to investigate the \emph{stability} of GNNs across different samples, approaching this problem through the lens of optimal transport. We show that the Rademacher complexity remains meaningful under noisy perturbations, and in particular that $\mathcal{R}_{S}(\mathcal{F_A})$ is Lipschitz‐continuous with respect to the $1-$Wasserstein distance between the two datasets.
The specific formulation of this distance depends on the underlying metric structure of the graph space. Following the structure of the previous section, we present two variants: one operating on the discrete empirical distributions over equivalence classes, and another based on continuous pseudo-metrics.

\begin{restatable}[Stability of the Rademacher]
{theorem}{stability}\label{thm: stability}
Let $S=\{G_1, \cdots, G_m\}$ and $S'=\{G'_1, \cdots, G'_m\}$ be two  graph samples drawn from $\mathcal{D}^m$. Let  $W_1(S,S')$ denote the 1-Wasserstein distance between the two sets and let $\alpha\geq 0$. Then, the empirical Rademacher complexity  is $\alpha-$Lipschitz continuous with respect to $W_1$:
    \begin{equation}
        \vert \mathcal{R}_{S}(\mathcal{F_A}) - \mathcal{R}_{S'}(\mathcal{F_A}) \vert {\leq \alpha \cdot W_1(S,S').}
    \end{equation}
    
The scaling factor $\alpha$ and the distance $W_1$ are defined according to the underlying space:
    \begin{itemize}
\item Discrete partitioning regime:\\
        $ \alpha=2$ and $W_1(S,S')=W_1(\mu^T_{S},\mu^T_{S'}):=\frac{1}{2}\sum\limits_{c\in \mathcal{C}} \vert \mu^T_{S}(c) - \mu^T_{S'}(c)\vert.$
        \item Continuous pseudo-metric space:\\    $\alpha=\lambda$ and $W_1(S,S')=W_1(\nu_S,\nu_{S'}):=\min\limits_{\pi\in\mathfrak S_m}\frac{1}{m}\sum\limits_{i=1}^m \mathrm{d}(G_i,G_{\pi(i)}'),$\\
     where $\mathfrak S_m$ denotes the set of all permutations over $[m]$ and $\nu_S:\mathcal{G}\to [0,1]$, is the uniform empirical measure $\nu_S(G)=\frac{1}{m}\sum\limits_{i=1}^m \delta_{G_i}(G)$ supported on the sample $S$. 
    \end{itemize}
\end{restatable}
In the first case, $W_1$ represents the optimal transport between the empirical distributions of graphs into the equivalence classes, which reduces to the scaled Total Variation Distance (TVD) \citep{verdu2014total} between the discrete measures $\mu^T_S$ and $\mu^T_{S'}$. In the second scenario, $W_1$ represents the optimal transport between the uniform empirical measures $\nu_S$ and $\nu_{S'}$, with respect to the pseudo-metric $\mathrm{d}$. The proof can be found in \Cref{app:proof-stability}.

\Cref{thm: stability} allows us to directly quantify the generalization error on a target $S'$ if we know the complexity of a reference sample, $S$, and its distance to the target, $W_1(S,S')$. 
This formally shows how structural differences between samples impact the reliability of the model.
\begin{restatable}[Cross-dataset generalization bound]{corollary}{stabilitygeneralization} 
{For any $\delta > 0$, with probability at least $1 - \delta$, the following holds for all $f \in \mathcal{F_A}$ and any $\gamma-$Lipschitz loss function:
    $$e^{gen}(f, S')\leq 
    2\gamma\left[\mathcal{R}_{S}(\mathcal{F}) 
    +\alpha W_1(S,S')\right]+3\sqrt{\frac{\ln(2/\delta)}{2m}}$$}
 \end{restatable}
The proof is available in \Cref{app:proof-cross-dataset}.

Furthermore, the continuity property of $\mathcal{R}(\mathcal{F_A})$ naturally yields robustness guarantees under localized noise. If the target dataset $S'$ is not drawn independently but is a perturbed version of $S$, the Rademacher bounds remain tightly controlled.

\begin{definition}[$\delta$-perturbation] Let $S=\{G_1, \cdots, G_m\}$ and $S'=\{G'_1, \cdots, G'_m\}$ two sets of graphs. We say that $S'$ is a $\delta-$perturbation of $S$ with respect to a pseudo-metric $d$ if 
there exists a bijection $f\colon S \to S'$, such that $\forall \ i \in [m]$, $f(G_i) = G_i'$ and
$d(G_i,G_i')\leq\delta$.
\end{definition}
If $S'$ is a $\delta-$perturbation of $S$, their distance $W_1(S,S')$ is equal to $\delta,$ and we can bound the Rademacher on $S'$ using the complexity computed on $S$.
\begin{corollary}[Robustness of the Rademacher under perturbations]
    Let $\mathcal{F_A}$ be a Lipschitz functions class with certificate $\lambda$. Let $S$ be a sample and $S'$ a $\delta-$perturbation of $S$. Then, the empirical Rademacher complexity of $\mathcal F$ on $S'$ is bounded by:
    \begin{align*}
R_{S'}(\mathcal{F_A}) \leq R_S(\mathcal{F_A})  + \lambda\delta.
    \end{align*}
\end{corollary}
 \section{Conclusion}\label{sec:conclusion}

In this study, we establish a principled connection between GNN generalization theory, expressivity, and the metric geometry of the underlying graph space. By employing empirical Rademacher complexity, we move beyond static, combinatorial measures to provide a data-dependent characterization of GNNs capacity.
This work generalizes notions from previous studies \cite{vasileiou2025survey, maskey2026graph} to arbitrary pseudo-metric spaces for which the hypothesis class is Lipschitz continuous.
We first formulate generalization bounds from expressivity-bounding graph invariants, before refining these bounds directly using the geometry of the underlying space. We show tightness when the pseudo-metric is in agreement with the invariant that bounds the hypothesis class and $\mathcal{F_A}$ is maximally expressive. In other cases, we can nonetheless ensure tightness by projecting $S$ to the quotient space induced by the invariant and repeating the procedure with an adequate metric there. Moreover, we show that our results are stable and robust under sampling variability.

\paragraph{Limitations \& Future work.}\label{sec:limitations-FW}
We identify several limitations of our study that offer a roadmap for future extensions.
Firstly, although our framework offers a unified perspective on the expressivity-generalization trade-off, the current analysis is restricted to binary classification and does not yet extend to multi-class evaluation or regression tasks.
Given known extensions of Rademacher complexity to multi-class classification and bounded regression problems \citep{li2018multi, Mohri2012-nv}, we anticipate adapting those techniques to broaden the theoretical scope of our results. Furthermore, as this work is primarily focused on establishing a principled theoretical foundation, we do not provide empirical verification of the derived bounds. While we formally characterize how the distribution of equivalence classes and the geometry of the input space control generalization, a systematic empirical study remains a necessary next step. Future work should investigate how these Rademacher-based bounds apply in practice across diverse GNN architectures, specifically verifying how the number of distinct partitions and the smoothness of the hypothesis class over the data geometry influence observed generalization error.
Finally, our current results bound the empirical Rademacher complexity without making assumptions on the underlying graph distribution. A promising avenue for future research involves characterizing the behavior of the \emph{true} Rademacher complexity when graphs are sampled from known probabilistic models, such as Random Graph Models (RGMs). In this context, graphons \cite{lovasz2012large} offer a natural framework for such evaluations; investigating how graph size influences color diversity in graphon samples would provide valuable insights into the asymptotic behavior of our theory and its implications for large-scale graph learning.

\newpage

\bibliography{reference}
\bibliographystyle{abbrvnat}

\clearpage
\newpage

\begin{appendices}

\titlecontents{section}
	[0pt]
	{}
	{\thecontentslabel\quad}
	{}
	{\titlerule*[0.5pc]{.}\contentspage}
	\titlecontents{subsection}
	[1.5em]
	{}
	{\thecontentslabel\quad}
	{}
	{\titlerule*[0.5pc]{.}\contentspage}

	\startcontents
	\printcontents{}{1}{{\large\textbf{Appendix~(Supplementary Materials)}}}

	\vskip15pt
	\hrule
	\vskip5pt

\section{Additional related works}\label{app:additional_RW}

The fundamental trade-off between expressivity and generalization in GNNs is attracting increasing attention within the community. A recent work by \citet{maskey2026graph} demonstrated that more expressive GNNs may have worse generalization capabilities, unless their increased complexity is balanced by sufficiently large training sets or reduced structural distance between training and test graphs. Their analysis introduces pseudo-metrics that capture structural similarity and reveal when expressivity hurts generalization. The case when more expressive power affects generalization is further refined by \citet{franks2024weisfeiler}. The authors propose using \emph{partial concepts} to derive bounds of VC dimension independent of the length of the embedding vector, $d$. Additionally, they show that for certain classes of graphs there are tighter lower bounds, thus confirming that more expressivity is not always worse. {  In a more general approach, \citep{rauchwerger2024generalization} establishes a bound on the generalization error independent of both the data and the parameter of \emph{any} MPNN. However, this approach eliminates the nuances that exist in different datasets. Depending on the task at hand, a more fine-grained analysis requires taking into account the distribution of graphs, for instance, by using the construction of graphons such as in \citep{levie2023graphon}. Nonetheless, that approach has its own drawbacks, since under the employed metrics, sparse graphs converge to the empty graph, which hinders seamless adoption in our context. 
Taking an alternative route, \citet{vasileiou2024covered} propose a unified framework that jointly analyzes generalization, robustness, and expressivity.
This relies on a new pseudo-metric termed \emph{Forest Distance}, inspired by \emph{Tree Distance} \citep{boker2021graph}. Nevertheless, the bounds are not data dependent and while vertex-attributed graphs are considered, only discrete attributes are assumed. Additionally, it only works if the aggregation method for graphs is mean pooling.} 
Meanwhile, \citet{li2024towards} proposed the notion of a k-variance margin-based generalization bound, defining the structural quality of graph embeddings in terms of their expressive power.

Collectively, these findings align with our work, illustrating a more nuanced relationship between model expressivity and generalization, and they are not restricted to message-passing GNNs, as is also the case for our study. Our work differs in that it provides a theoretical analysis grounded in Rademacher complexity, using coloring-based partitioning as a formal lens to characterize expressivity.

\section{Covering numbers and Dudley's integral}\label{app:metric}
In this appendix, we introduce some theoretical tools used throughout the paper and in the proofs. Our goal is to keep the presentation self‐contained, highlighting only the results needed in later proofs. In particular, we recall basic notions on covering numbers and we include the Dudley entropy integral, which is instrumental in deriving bounds on the Rademacher complexity.

\subsection{Dudley’s entropy integral}

Dudley's entropy integral bounds the Rademacher complexity of a function class using its covering numbers. We consider the restriction of $\mathcal{F_A}$ to a sample $S = \{x_1, x_2, \dots, x_m\}$, which is the set of vectors: $\mathcal{F_A}_{|S}=\{(f_\theta(x_1),\dots ,f_\theta(x_m)) | f_\theta \in \mathcal{F_A}\} \subseteq  \mathbb{R}^{m}$. The covering number in the theorem is computed with respect to the standard Euclidean norm $\vert\vert\cdot\vert\vert_{2}$ on $\mathbb{R}^{m}$.

\begin{theorem}[Dudley's entropy integral bound \cite{bartlett2017spectrally}]
\label[theorem]{app:dudley}
Let $\mathcal{F_A}$ be a real-valued function class taking values in $[0, 1]$ , and assume that $\bm{0} \in \mathcal{F_A}$. Then
\begin{equation}
\label{eq-dudley-bound}
\mathcal{R}_{S}(\mathcal{F_A}) \leq \inf_{\alpha > 0}\left\{\frac{4\alpha}{\sqrt{m}} + \frac{12}{m}\int_{\alpha}^{\sqrt{m}}\sqrt{\log\mathcal{N}(\mathcal{F_A}_{|S},\epsilon,\vert\vert\cdot\vert\vert_{2}})\mathrm{d}\epsilon\right\}.
\end{equation}
\end{theorem}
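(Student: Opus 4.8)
The plan is to prove this standard Dudley entropy integral bound by the classical multi-scale \emph{chaining} argument. First I would pass from functions to vectors: the restriction $\mathcal{F}_{|S} = \{(f(x_1),\dots,f(x_m)) : f \in \mathcal{F}\} \subseteq \mathbb{R}^m$ turns the empirical Rademacher complexity into $\mathcal{R}_{S}(\mathcal{F}) = \frac{1}{m}\,\mathbb{E}_{\sigma}\bigl[\sup_{v \in \mathcal{F}_{|S}} \langle \sigma, v\rangle\bigr]$, i.e.\ the expected supremum of a sub-Gaussian process indexed by a bounded subset of Euclidean space. Since every coordinate lies in $[0,1]$, each $v \in \mathcal{F}_{|S}$ has $\|v\|_{2} \le \sqrt{m}$, so $\mathcal{F}_{|S}$ is contained in a ball of radius $\sqrt{m}$, which will serve as the coarsest scale of the chain.

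Next I would construct the chain. For $k = 0,1,2,\dots$ set dyadic scales $\epsilon_{k} = 2^{-k}\sqrt{m}$ and let $C_{k}$ be a minimal $\epsilon_{k}$-cover of $\mathcal{F}_{|S}$, of cardinality $N_{k} := \mathcal{N}(\mathcal{F}_{|S}, \epsilon_{k}, \|\cdot\|_{2})$, with nearest-point projections $\pi_{k}$. The hypothesis $\bm{0} \in \mathcal{F}$ handles the top level: because the radius equals $\epsilon_{0}$, I can take $\pi_{0}(v) = \bm{0}$. Telescoping along the chain gives, for any truncation depth $K$, the identity $v = \sum_{k=1}^{K}\bigl(\pi_{k}(v) - \pi_{k-1}(v)\bigr) + \bigl(v - \pi_{K}(v)\bigr)$, and hence $\langle \sigma, v\rangle = \sum_{k=1}^{K}\langle \sigma, \pi_{k}(v) - \pi_{k-1}(v)\rangle + \langle \sigma, v - \pi_{K}(v)\rangle$.

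The core estimate bounds each \emph{link} with Massart's finite-class lemma (the sub-Gaussian maximal inequality): for a finite set $W$ of vectors, $\mathbb{E}_{\sigma}\bigl[\sup_{w \in W}\langle \sigma, w\rangle\bigr] \le \bigl(\max_{w \in W}\|w\|_{2}\bigr)\sqrt{2\log|W|}$. At level $k$ the increment $\pi_{k}(v) - \pi_{k-1}(v)$ takes at most $N_{k}N_{k-1} \le N_{k}^{2}$ distinct values, each of norm at most $\epsilon_{k} + \epsilon_{k-1} = 3\epsilon_{k}$ by the triangle inequality, so the lemma yields $\mathbb{E}_{\sigma}\bigl[\sup_{v}\langle \sigma, \pi_{k}(v) - \pi_{k-1}(v)\rangle\bigr] \le 3\epsilon_{k}\sqrt{2\log N_{k}^{2}} = 6\epsilon_{k}\sqrt{\log N_{k}}$. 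The residual is controlled by Cauchy–Schwarz, $\langle \sigma, v - \pi_{K}(v)\rangle \le \|\sigma\|_{2}\,\epsilon_{K} \le \sqrt{m}\,\epsilon_{K}$. This is the step where $\sqrt{\log \mathcal{N}}$ enters the bound, and I regard it as the conceptual heart of the argument.

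Finally I would convert the dyadic sum to the integral and truncate at $\alpha$. Using that $\mathcal{N}(\mathcal{F}_{|S}, \cdot, \|\cdot\|_{2})$ is non-increasing in its radius and that $\epsilon_{k} = 2(\epsilon_{k} - \epsilon_{k+1})$, each term obeys $\epsilon_{k}\sqrt{\log N_{k}} \le 2\int_{\epsilon_{k+1}}^{\epsilon_{k}}\sqrt{\log \mathcal{N}(\mathcal{F}_{|S}, \epsilon, \|\cdot\|_{2})}\,d\epsilon$, so summing telescopes the links into $\frac{12}{m}\int_{\alpha}^{\sqrt{m}}\sqrt{\log \mathcal{N}}\,d\epsilon$ once the chain is stopped at the level whose scale is nearest $\alpha$. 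The residual term at that level contributes the tail $\tfrac{4\alpha}{\sqrt{m}}$, the constant absorbing the dyadic slack between $\epsilon_{K}$ and $\alpha$. Taking the infimum over $\alpha > 0$ gives the claim. I expect the main obstacle to be purely the constant bookkeeping — choosing the stopping level so that the integral lower limit lands exactly on $\alpha$ while the truncation residual lands on $\frac{4\alpha}{\sqrt{m}}$ — rather than any conceptual difficulty, since the statement is the textbook bound of \citet{bartlett2017spectrally}.
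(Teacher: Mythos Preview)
Your chaining argument is correct and is the standard proof. The paper itself does not prove this theorem --- it is cited from \citet{bartlett2017spectrally} as background --- but the paper's proof of its refined Dudley bound (\cref{prop:dudley_improved}) follows exactly the skeleton you outline (dyadic scales $\epsilon_k = 2^{-(k-1)}\sqrt{m}$, telescoping through minimal covers anchored at $\bm{0}$, Massart's lemma on each link giving the $6\epsilon_{k}\sqrt{\log N_k}$ contribution, conversion of the dyadic sum to the integral, and choosing the truncation level so that $\epsilon_N \le 4\alpha$), differing only in that the residual term is bounded there via the partition-structure result of \cref{prop:rad-bound-wl-colors} rather than the Cauchy--Schwarz step $\langle\sigma, v - \pi_K(v)\rangle \le \sqrt{m}\,\epsilon_K$ that you use.
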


In  \cref{prop:dudley_improved}, we derive a tighter bound using the $ p$-dimensional structure of $\mathcal{F_A}_{|S}$.

\section{Proofs }\label{app:proof}
\subsection{Lipschitz requirements }\label{app:loss}
\begin{proposition}\label{app:bound_l_ce_lipsc} Let $\ell_{CE}$ be the cross-entropy loss function. Moreover, let $f_\theta(G_i) = \psi(\bm{\beta}^\top\phi(G_i))$ be a GNN output, where $\psi(\cdot)$ is the logistic activation function (i.e., $\psi\colon \mathbb{R} \to [0, 1]$), and $\phi(\cdot)$ the GNN's final representation. Assume that $\|\phi\|_\infty \leq b_\phi$ and $\|\bm{\beta}\|_1 \leq B_\beta$, for constants $b, B_\beta > 0$. Therefore, $\ell_{CE}$ is \textit{Lipschitz continuous}.
\end{proposition}
\begin{proof}
    The cross-entropy loss is defined as:
    \begin{align*}
        \ell_{CE}(f_\theta(G), y) = -\sum_{i=1}^{m} \left[ y_i \log(f_\theta(G_i)) + (1-y_i)\log(1-f_\theta(G_i)) \right].
    \end{align*}
    
The partial derivative of $\ell_{CE}$ with respect to $z_i = \bm{\beta}^T\phi(G_i)$ is
    \begin{align}
        \frac{\partial \ell_{CE}}{\partial z_i}
        &= -\left[ y_i \frac{\psi'(z_i)}{\psi(z_i)} - (1-y_i)\frac{\psi'(z_i)}{1-\psi(z_i)} \right] \nonumber\\
        &= -\left[y_i \frac{\psi(z_i)(1-\psi(z_i))}{\psi(z_i)} - (1-y_i)\frac{\psi(z_i)(1-\psi(z_i))}{1-\psi(z_i)} \right] \nonumber\\
        &= -\left[ y_i (1-\psi(z_i)) - (1-y_i)\psi(z_i) \right] \nonumber\\
        &= \psi(z_i) - y_i  \hspace{25em} (^*) \nonumber
    \end{align}

Since $\phi(G_i)$ is bounded by $b_\phi$ in $L_\infty$-norm, and $\|\bm\beta\|_1 \leq B_\beta$, we have
    \begin{align*}
        |z_i| = |\bm\beta^T \phi(G_i)| \leq \sum_{j=1}^d |\beta_j| |\phi_j(G_i)| \leq \|\phi\|_\infty \|\bm\beta\|_1 \leq b_\phi B_\beta.
    \end{align*}

Thus, $z_i \in [-b_\phi B_\beta, b_\phi B_\beta]$, and the sigmoid function $\psi(z)$ satisfies:
\[
\psi(-b_\phi B_\beta) \leq \psi(z_i) \leq \psi(b_\phi B_\beta),
\]
for all $G_i$. 

Then, we have that
\[
(^*): |\psi(z_i) - y_i| \leq \max\{ |\psi(b_\phi B_\beta)|, |1-\psi(b_\phi B_\beta)| \},
\]
since $y_i \in \{0,1\}$.

The derivative of the loss with respect to $z_i$ is bounded; therefore, $\ell_{CE}(f_\theta(G_i), y_i)$ is Lipschitz continuous in $\phi(G_i)$.

\end{proof}

\begin{proposition}\label{app:bound_gen_l_ce_lipsc}
    Assume the conditions from Prop.~\ref{app:bound_l_ce_lipsc} hold. Moreover, assume that the activation function is a sigmoid, i.e., $\psi\colon \mathbb{R} \to [a, b]$, for $a, b \in \mathbb{R}$ and $a < b$. In addition, assume that its derivative is bounded, $|\psi'(x)| \leq C$, for $C > 0$. Analogously, let $\ell_{CE}$ be the cross-entropy loss function, and define $g\colon [a, b] \to [0, 1]$, $g(x) = \tfrac{x - a}{b-a}$. Therefore, $\ell_{CE}(g\circ f_\theta(G_i))$ is Lipschitz continuous.
\end{proposition}
\begin{proof}
    Using the cross-entropy loss definition shown before, we have:

    \begin{align*}
        \ell_{CE}(g\circ f_\theta(G), y) = -\sum_{i=1}^{m} \left[ y_i \log(f_\theta(g\circ G_i)) + (1-y_i)\log(1-g\circ f_\theta(G_i)) \right].
    \end{align*}

    Analogously, the partial derivative of $\ell_{CE}$ with respect to $z_i = \beta^\top \phi(G_i)$ is

    \begin{align}
        \frac{\partial \ell_{CE}}{\partial z_i}
        &= -\left[ y_i \frac{1}{g(\psi(z_i))} - (1-y_i)\frac{1}{1-g(\psi(z_i))} \right]\cdot\frac{1}{g'(\psi(z_i))}\cdot\psi'(z_i) \nonumber\\
        &= -\left[ y_i \frac{1}{g(\psi(z_i))} - (1-y_i)\frac{1}{1-g(\psi(z_i))} \right]\cdot\frac{1}{(b-a)}\cdot\psi'(z_i) \nonumber\\
        &= \left[\frac{g(\psi(z_i)) - y_i}{g(\psi(z_i))(1 - g(\psi(z_i)))} \right]\cdot\frac{C}{(b-a)} \hspace{13.5em} (^*) \nonumber
    \end{align}

    Since $|z_i| \leq b_\phi B_\beta$, then $a < \psi(-b_\phi B_\beta) \leq \psi(z_i) \leq \psi(b_\phi B_\beta) < b$. Hence, we have that $0 < g(-b_\phi B_\beta) \leq g(\psi(z_i)) \leq g(b_\phi B_\beta) < 1$, for all $G_i$, and
    \[
    (^*): \left|\frac{g(\psi(z_i))- y_i}{g(\psi(z_i))(1 - g(\psi(z_i)))} \right|\cdot\frac{C}{(b-a)} \leq \frac{C}{(b - a)}\cdot \max\biggl\{\frac{1}{|g(\psi(b_\phi B_\beta))|}, \frac{1}{|1-g(\psi(b_\phi B_\beta))|} \biggr\}.
    \]

    Again, because the derivative of the loss with respect to $z_i$ is bounded, we have that $\ell_{CE}(g\circ f_\theta(G_i), y_i)$ is Lipschitz continuous in $\phi(G_i)$.
\end{proof}

\subsection{Upper bounds for real-valued functions}\label{app:real-valued_functions_bound}
\begin{proposition}
{Let $S = \{G_1, \dots, G_m\}$ be a sample of $m$ graphs, partitioned into $p$ disjoint classes $C_1, \dots, C_p$ by a graph invariant. Let $\mathcal{F}:=\{f_\theta:\mathcal{G}\to \mathbb{R} \mid \theta\in\Theta\}$ be a class of functions whose output $f_\theta(G)$ is the same on each graph of a fixed class $G \in C_j$.}
The empirical Rademacher complexity of $\mathcal{F}$ on $S$ is bounded by:
\begin{align}
    \mathcal{R}_{S}(\mathcal{F}) \leq \frac{\sup_{\Theta} \|f_{\theta}\|_2\sqrt{p}}{m} \label{eq:real_norm2}\\ \quad \quad \text{and} \quad \quad \mathcal{R}_{S}(\mathcal{F})\leq\frac{\sup \|f_{\theta}\|_\infty}{m} \sum_{j=1}^p \sqrt{|I_j|}\label{eq:real_norminfty}
\end{align} 
where $\|f_{\theta}\|_2$ and $\|f_{\theta}\|_\infty$ are respectively the $\ell_2-$norm and the $\ell_\infty-$ norm  of the function's outputs over the sample $S$.
\end{proposition}

\paragraph{Proof of \Cref{eq:real_norm2}.}
\begin{proof}
 The proof proceeds by first reorganizing the sum by equivalence classes, then applying the Cauchy-Schwarz inequality to separate the function-dependent norm from the Rademacher variables, and finally using Jensen's inequality.

Applying $T$ on a sample \mbox{$S=\{G_1, \dots, G_m\}$} results in $p$ partitions $C_1, \dots, C_p$, where each $C_j=\{G\in S: T(G)=c_j\}$ is an equivalence class containing all graphs with the same color $c_j$. Since $\mathcal{A} \preceq  T$, let $f_\theta(G_j)$ be the constant output for any graph in partition $C_j$ and let $I_j$ be the set of indices of graphs in $C_j$: 
    $$
        I_j := \{\, i \in [m] : T(G_i) = c_j \,\}.
    $$
    
    First, we write the definition of the empirical Rademacher complexity and group the sum over the $p$ partitions.
    
    \begin{align}\label{eq:radcolor_setup}
        \mathcal{R}_{S}(\mathcal{F_A}) &= \mathbb{E}_{\bm{\sigma}} \left[ \sup_{\Theta}\frac{1}{m} \sum_{i=1}^m \sigma_i f_\theta(G_i) \right] \nonumber \\
        &= \frac{1}{m} \mathbb{E}_{\bm{\sigma}} \left[ \sup_{\Theta} \sum_{j=1}^p f_j(\Theta) \sum_{i \in I_j} \sigma_i \right] 
    \end{align}
Let $f_j(\Theta)$ be the constant output for any graph in partition $I_j$.

Let $Z_j = \sum_{i \in I_j} \sigma_i$. The inner sum is $\sum_{j=1}^p f_j(\Theta) Z_j$. We apply the Cauchy-Schwarz inequality to this sum over $j$:

$$ \sum_{j=1}^p f_j(\Theta) Z_j = \sum_{j=1}^p\Big(f_j(\Theta)\sqrt{|I_j|}\Big)\Big(\frac{Z_j}{\sqrt{|I_j|}}\Big) \leq \sqrt{\sum_{j=1}^p f_j(\Theta)^2 |I_j|} \cdot \sqrt{\sum_{j=1}^p \frac{Z_j^2}{|I_j|}} $$

The first term on the right is precisely the L2-norm $L(\Theta)$, since $\sum_{j=1}^p f_j(\Theta)^2 |I_j| = \sum_{i=1}^m f_j(\Theta)^2 = L(\Theta)^2$. Substituting this back into the main expression gives:
\begin{align*}
\mathcal{R}_{S}(\mathcal{F}) &\leq \frac{1}{m} \mathbb{E}_{\bm{\sigma}} \left[ \sup_{\Theta} \left( L(\Theta) \cdot \sqrt{\sum_{j=1}^p \frac{Z_j^2}{|I_j|}} \right) \right] \\
&= \frac{1}{m} \mathbb{E}_{\bm{\sigma}} \left[ \left(\sup_{\Theta} L(\Theta)\right) \cdot \sqrt{\sum_{j=1}^p \frac{Z_j^2}{|I_j|}} \right] \\
&= \frac{\sup_{\Theta} L(\Theta)}{m} \mathbb{E}_{\bm{\sigma}} \left[ \sqrt{\sum_{j=1}^p \frac{(\sum_{i \in I_j} \sigma_i)^2}{|I_j|}} \right]
\end{align*}
The second line follows because the term involving the Rademacher variables $\sigma_i$ does not depend on $\Theta$, allowing us to separate the supremum. The third line follows because $\sup_{\Theta} L(\Theta)$ is a constant with respect to the expectation over $\sigma$.

Next, we apply Jensen's inequality to the expectation. Since the square root function is concave, $\mathbb{E}[\sqrt{X}] \leq \sqrt{\mathbb{E}[X]}$.
$$ \mathcal{R}_{S}(\mathcal{F}) \leq \frac{\sup_{\Theta} L(\Theta)}{m} \sqrt{ \mathbb{E}_{\bm{\sigma}} \left[ \sum_{j=1}^p \frac{(\sum_{i \in I_j} \sigma_i)^2}{|I_j|} \right] } $$
Finally, we evaluate the expectation inside the square root. By the linearity of expectation and the fact that $\sigma_i$ are independent random variables with $\mathbb{E}[\sigma_i]=0$ and $\mathbb{E}[\sigma_i^2]=1$, we have:
\begin{align*}
\mathbb{E}_{\bm{\sigma}} \left[ \sum_{j=1}^p \frac{(\sum_{i \in I_j} \sigma_i)^2}{|I_j|} \right] &= \sum_{j=1}^p \frac{\mathbb{E}_{\bm{\sigma}}[(\sum_{i \in I_j} \sigma_i)^2]}{|I_j|} \\
&= \sum_{j=1}^p \frac{\sum_{i\in I_j}\mathbb{E}[\sigma_i^2]+\sum_{i\neq k}\mathbb{E}[\sigma_i\sigma_k]}{|I_j|} \\
&= \sum_{j=1}^p \frac{\sum_{i\in I_j}\mathbb{E}[\sigma_i^2]}{|I_j|} \\
&= \sum_{j=1}^p \frac{|I_j|}{|I_j|} \\
&= p. 
\end{align*}
Substituting this result back gives the final bound:
$$ \mathcal{R}_{S}(\mathcal{F}) \leq \frac{\sup_{\Theta} L(\Theta)\sqrt{p}}{m} = \frac{\sup_{\Theta} \|f_\theta \|_2\sqrt{p}}{m}.$$
\end{proof}

\paragraph{Proof of \Cref{eq:real_norminfty}}

\begin{proof}
Let $Z_j = \sum_{i \in I_j} \sigma_i$. The inner sum is $\sum_{j=1}^p f_\theta(G_j) Z_j$. We apply the Cauchy-Schwarz inequality to this sum over $j$:
    $$
        \sum_{j=1}^p f_\theta(G_j) Z_j \leq \sum_{j=1}^p |f_\theta(G_j)Z_j|
    $$
By Hölder inequality, we have that $ \sum_{j=1}^p |f_\theta(G_j)Z_j|\leq \|f_\theta\|_\infty\|Z\|_1.$ Then,
\begin{align}
        \mathcal{R}_{S}(\mathcal{F_A}) &\leq \frac{1}{m} \mathbb{E}_{\bm{\sigma}} \left[   \sup_{\Theta} \left( \|f_{\theta}\|_\infty \cdot \|Z\|_1 \right) \right]  \nonumber \\
        &= \frac{1}{m} \mathbb{E}_{\bm{\sigma}} \left[   \left( \sup_{\Theta}  \|f_{\theta}\|_\infty \right) \cdot \sum_{j=1}^p |Z_j| \right]  \nonumber \\
        &= \frac{\sup_{\Theta} \|f_{\theta}\|_\infty}{m} \mathbb{E}_{\bm{\sigma}} \left[     \sum_{j=1}^p |Z_j|  \right]  \nonumber 
    \end{align}
    The second line follows because the term involving the Rademacher variables $\sigma_i$ does not depend on $\Theta$, allowing us to separate the supremum. The third line follows because $\sup_{\Theta}$ is a constant with respect to the expectation over $\sigma$.
\begin{align*}
\mathbb{E}_{\bm{\sigma}} \left[     \sum_{j=1}^p |Z_j|\right] 
            &= \sum_{j=1}^{p} \mathbb{E}_{\bm{\sigma}} \left[   \left| Z_j   \right| \right] \\
        &\leq \sum_{j=1}^{p} \sqrt{\mathbb{E}_{\bm{\sigma}}\left(\sum_{i \in I_j} \sigma_i\right)^2} \\
        &=  \sum_{j=1}^{p} \sqrt{|I_j|}
\end{align*}

    Substituting back in the main equation we get
\begin{align}
    \mathcal{R}_{S}(\mathcal{F_A}) &\leq
    \frac{\sup_{\Theta} \|f_{\theta}\|_\infty}{m} \mathbb{E}_{\bm{\sigma}} \left[     \sum_{j=1}^p |Z_j|  \right] \\
    &\leq \frac{\sup_{\Theta} \|f_{\theta}\|_\infty}{m} \sum_{j=1}^{p_{\varepsilon}} \sqrt{|I_j|}.
\end{align}

\end{proof}
\subsection{Bounds: From discrete partitions to metric}
\upcolor* \label{app:proof_color}
\begin{proof}
Let $T : \mathcal{G} \to \mathcal
C$ be a graph invariant that induces a partition of $S$ into $p$ disjoint equivalence classes $C_1,\dots,C_p$. Let $c_j$ be the unique value assigned by $T$ to all graphs belonging to class $C_j$. We set
\begin{equation}\label{eq:partition-multiplicity}
I_j := \{\, i \in [m] : T(G_i) = c_j \,\}.
\end{equation}
Since the architecture $\mathcal{A}$ is as good as $T$ in distinguishing graphs, the output of $f_\theta \in \mathcal{F_A}$ is assumed to be identical for all graphs in the same partition, i.e. $f_\theta(G_i)=f_\theta(G_k)$ for any $i,k \in I_j$. 
Let $f_j(\Theta)$ be the constant output for any graph in partition $I_j$.
First, we write the definition of the empirical Rademacher complexity and group the sum over the $p$ partitions.
\begin{align}\label{eq:grouping}
\mathcal{R}_{S}
&= \mathbb{E}_{\sigma} \left[ \sup_{\Theta}\frac{1}{m} \sum_{i=1}^m \sigma_i f_\theta(G_i) \right] \nonumber\\
&= \frac{1}{m} \mathbb{E}_{\sigma} \left[ \sup_{\Theta} \sum_{j=1}^p f_j(\Theta) \sum_{i \in I_j} \sigma_i \right] 
\end{align}
The sum over $m$ individual graphs is reorganized into a sum over the $p$ partitions ($I_j$), where each partition contains graphs with the same output. This allows us to factor out the constant function value, represented by $f_j(\Theta)$. \paragraph{Upper bound.}
\begin{align*}
    \frac{1}{m} \mathbb{E}_{\sigma} \left[ \sup_{\Theta} \sum_{j=1}^p f_j(\Theta) \sum_{i \in I_j} \sigma_i \right]
&\leq \frac{1}{m} \mathbb{E}_{\sigma} \left[\sum_{j=1}^p \left|\sum_{i \in I_j} \sigma_i\right| \right] \\
&= \frac{1}{m} \sum_{j=1}^p \mathbb{E}_{\sigma} \left[\left|\sum_{i \in I_j} \sigma_i\right|\right]  \\
&\leq \frac{1}{m} \sum_{j=1}^p \sqrt{\mu_j}
\end{align*}
The first inequality follows from the fact that $f_j(\Theta)\in[-1,1]$ and the maximum will be when $f_j(\Theta) = \mathrm{sign}(\sum_{i \in I_j} \sigma_i)$.
The second inequality applies Khintchine's inequality \cite{haagerup1981best}, which bounds the expected value of the absolute value of a sum of Rademacher variables.
\paragraph{Lower bound.}
We begin our proof of the lower bound from equation \ref{eq:grouping}. By hypothesis, the function class contains functions as expressive as $T$, i.e. as good as $T$ in distinguishing graphs. Moreover, if $\mathcal{A}$ is a universal approximator over the classes, there exist a $f_\theta\in \mathcal{F_A}$ that can realize the supremum: $f_\theta(G_j) = sign( \sum_{i \in I_j} \sigma_i)$ for some $\theta\in\Theta$. Then,
\begin{align}
    \frac{1}{m} \mathbb{E}_{\bm{\sigma}} \left[ \sup_{\Theta} \sum_{j=1}^p f_j(\Theta) \sum_{i \in I_j} \sigma_i \right]&=\frac{1}{m} \mathbb{E}_{\bm{\sigma}} \left[  \sum_{j=1}^p \vert\sum_{i \in I_j} \sigma_i \vert\right]
\end{align}
By the linearity of expectation, we have:
\begin{align}\label{eq:lin}
    \frac{1}{m} \mathbb{E}_{\bm{\sigma}} \left[  \sum_{j=1}^p \vert\sum_{i \in I_j} \sigma_i \vert\right]&=\frac{1}{m}   \sum_{j=1}^p \mathbb{E}_{\bm{\sigma}}\left[ \vert\sum_{i \in I_j} \sigma_i \vert\right]
\end{align}
By Khintchine's inequality we get the following bound on the expected value \cite{haagerup1981best}:
\begin{align}
    \mathbb{E}_{\bm{\sigma}}\left[ \vert\sum_{i \in I_j} \sigma_i \vert\right]\geq \sqrt{\frac{|I_j|}{2}}.
\end{align}
Then, substituting back into \cref{eq:lin}:
\begin{align}
    \frac{1}{m}   \sum_{j=1}^p \mathbb{E}_{\bm{\sigma}}\left[ \vert\sum_{i \in I_j} \sigma_i \vert\right]\geq \frac{1}{m}   \sum_{j=1}^p \sqrt{\frac{|I_j|}{2}}.
\end{align}
Given that $\mu_j:= \frac{|I_j|}{m}$, we can write the final lower bound for $\mathcal{R}_{S}(\mathcal{F})$ as:
\begin{align*}
    \mathcal{R}_{S}(\mathcal{F}) \geq \frac{1}{m} \sum_{j=1}^p \sqrt{\frac{|I_j|}{2}}= \sum_{j=1}^p \sqrt{\frac{\mu_j}{2m}}.
\end{align*}
\end{proof}

\dudleycoveringA*\label{app:proof-dudley}

\begin{proof}
The proof uses the Dudley entropy integral, adapting the bound to the $p$-dimensional structure of the function class $\mathcal{F}$.

Let us define a sequence of scales $\epsilon_k = \sqrt{m} \cdot 2^{-(k-1)}$ for $k \ge 1$. For each $k$, let $V_k$ be a minimal $\epsilon_k$-cover of the set of output vectors $\mathcal{F}_{|S}$ with respect to the $\ell_2$-norm, so $|V_k| = \mathcal{N}(\mathcal{F}_{|S}, \epsilon_k, ||\cdot||_2)$. Note that $\mathcal{F}_{|S}$ lies within the unit hypercube $[-1,1]^m$ therefore the cover is finite for $\epsilon_k > 0$.

For any function $f \in \mathcal{F}$, let $\bm{f}_{|S}$ be its corresponding vector in $\mathbb{R}^m$. Let $\bm{v}^k[\bm{f}]$ be a vector in $V_k$ such that $||\bm{f}_{|S} - \bm{v}^k[\bm{f}]||_2 \le \epsilon_k$. We decompose the vector $\bm{f}_{|S}$ using a telescoping sum:
$$ \bm{f}_{|S} = (\bm{f}_{|S} - \bm{v}^N[\bm{f}]) + \sum_{k=1}^{N-1} (\bm{v}^k[\bm{f}] - \bm{v}^{k+1}[\bm{f}]) + \bm{v}^1[\bm{f}] $$
The quantity to bound is $m \mathcal{R}_S(\mathcal{F}) = \mathbb{E}_{\bm{\sigma}} \sup_{f \in \mathcal{F}} \langle \bm\sigma, \bm{f}_{|S} \rangle$. Substituting the decomposition and using the triangle inequality for suprema gives:
\begin{align*}
    m \mathcal{R}_S(\mathcal{F}) \le \mathbb{E}_{\bm{\sigma}} \left[ \sup_{f \in \mathcal{F}} \langle \bm\sigma, \bm{f}_{|S} - \bm{v}^N[\bm{f}] \rangle \right] &+ \sum_{k=1}^{N-1} \mathbb{E}_{\bm{\sigma}} \left[ \sup_{f \in \mathcal{F}} \langle \bm\sigma, \bm{v}^k[\bm{f}] - \bm{v}^{k+1}[\bm{f}] \rangle \right] \\
    &+ \mathbb{E}_{\bm{\sigma}} \left[ \sup_{f \in \mathcal{F}} \langle \bm\sigma, \bm{v}^1[\bm{f}] \rangle \right]
\end{align*}
For the last term, note the scale $\epsilon_1 = \sqrt{m}$. Since $f(G_i) \in [-1,1]$, we have $||\bm{f}_{|S}||_2^2 = \sum_{i=1}^m f(G_i)^2 \le m$, which means $||\bm{f}_{|S}||_2 \le \sqrt{m}$. The zero vector $\bm{0}$ is in $\mathcal{F}_{|S}$ because $\bm{0} \in \mathcal{F}$, and for any $\bm{f}_{|S}$, $||\bm{f}_{|S} - \bm{0}||_2 \le \sqrt{m} = \epsilon_1$. Therefore, $V_1 = \{\bm{0}\}$ is a valid $\epsilon_1$-cover. We can choose $\bm{v}^1[\bm{f}] = \bm{0}$ for all $f \in \mathcal{F}$, causing the last term to be zero.

The first term is the expected supremum over the set of residual vectors, $\mathcal{G}_N = \{\bm{f}_{|S} - \bm{v}^N[\bm{f}] : f \in \mathcal{F}\}$, which is precisely $m \cdot \mathcal{R}_S(\mathcal{G}_N)$.

To bound the Rademacher complexity, $\mathcal{R}_S(\mathcal{G}_N)$, we can leverage two key properties of this class of residual functions. First, $\mathcal{G}_N$ inherits the partition structure from $\mathcal{F}$, which means $(f(G_i) - v_i^N[f]) = (f(G_j) - v_j^N[f])$ whenever $f(G_i) = f(G_j)$.

Given these properties, we can apply \Cref{app:real-valued_functions_bound}, \cref{eq:real_norm2}:

$$ \mathcal{R}_S(\mathcal{G}_N) \le \frac{\sup_{\bm{g} \in \mathcal{G}_N} ||\bm{g}||_2 \cdot \sqrt{p}}{m} $$
By definition of the cover $V_N$, we have $\sup_{g \in \mathcal{G}_N} ||\bm{g}||_2 \le \epsilon_N$. Substituting this in:
$$ \mathcal{R}_S(\mathcal{G}_N) \le \frac{\epsilon_N \sqrt{p}}{m} $$
Therefore, the term we need to bound is $m \cdot \mathcal{R}_S(\mathcal{G}_N)$, which gives:
$$ \mathbb{E}_{\bm{\sigma}} \left[ \sup_{f \in \mathcal{F}} \langle \bm\sigma, \bm{f}_{|S} - \bm{v}^N[\bm{f}] \rangle \right] = m \cdot \mathcal{R}_S(\mathcal{G}_N) \le \epsilon_N \sqrt{p} $$

For each $k \in \{1, \dots, N-1\}$, we bound $\mathbb{E}_{\bm{\sigma}} \left[ \sup_{f \in \mathcal{F}} \langle \bm\sigma, \bm{v}^k[\bm{f}] - \bm{v}^{k+1}[\bm{f}] \rangle \right]$.
Let $W_k = \{ \bm{v}^k[\bm{f}] - \bm{v}^{k+1}[\bm{f}] : f \in \mathcal{F} \}$. The expression is the Rademacher complexity of this finite set, $\mathbb{E}_{\bm{\sigma}} \left[ \sup_{\bm{w} \in W_k} \langle \bm\sigma, \bm{w} \rangle \right]$.
The size of this set is $|W_k| \le |V_k| \cdot |V_{k+1}|$. Since the covering numbers are monotonic ($\epsilon_k > \epsilon_{k+1} \implies |V_k| \le |V_{k+1}|$), we have $|W_k| \le |V_{k+1}|^2$.
The norm of any element $w \in W_k$ is bounded by the triangle inequality:
$$ ||\bm{w}||_2 = ||\bm{v}^k[\bm{f}] - \bm{v}^{k+1}[\bm{f}]||_2 \le ||\bm{v}^k[\bm{f}] - \bm{f}_{|S}||_2 + ||\bm{f}_{|S} - \bm{v}^{k+1}[\bm{f}]||_2 \le \epsilon_k + \epsilon_{k+1} $$
Since $\epsilon_k = 2\epsilon_{k+1}$, the norm is bounded by $3\epsilon_{k+1}$.
Applying Massart's Lemma:
\begin{align*}
    \mathbb{E}_{\bm{\sigma}} \left[ \sup_{w \in W_k} \langle \sigma, w \rangle \right] &\le (3\epsilon_{k+1}) \cdot \sqrt{2\log|W_k|} \le 3\epsilon_{k+1} \sqrt{2\log(|V_{k+1}|^2)} \\
    &= 3\epsilon_{k+1} \sqrt{4\log|V_{k+1}|} = 6\epsilon_{k+1} \sqrt{\log \mathcal{N}(\mathcal{F}_{|S}, \epsilon_{k+1}, ||\cdot||_2)}
\end{align*}

Combining the bounds on the residual and chain links yields:
\begin{align*}
m\mathcal{R}_S(\mathcal{F}) &\le \epsilon_N \sqrt{p} + \sum_{k=1}^{N-1} 6 \epsilon_{k+1} \sqrt{\log \mathcal{N}(\mathcal{F}_{|S}, \epsilon_{k+1}, ||\cdot||_2)}\\ &\le \epsilon_N \sqrt{p} + 12\sum_{k=1}^{N-1} (\epsilon_{k} -\epsilon_{k+1}) \sqrt{\log \mathcal{N}(\mathcal{F}_{|S}, \epsilon_k, ||\cdot||_2)}
\end{align*}
Using the standard step of bounding the sum with an integral and the monotonicity of the covering number:
$$ m\mathcal{R}_S(\mathcal{F}) \le \epsilon_N \sqrt{p} + 12 \int_{\epsilon_{N+1}}^{\sqrt{m}} \sqrt{\log\mathcal{N}(\mathcal{F}_{|S}, \epsilon, ||\cdot||_2)}d\epsilon $$

For any given $\alpha > 0$, we choose $N$ to be the largest integer such that $\epsilon_{N+1} > \alpha$. This implies $\epsilon_{N+2} \le \alpha$.
From the definition of the scales, we have $\epsilon_N = 2\epsilon_{N+1} = 4\epsilon_{N+2}$. This gives the bound $\epsilon_N \le 4\alpha$.
The lower limit of the integral, $\epsilon_N$, is greater than $\alpha$. Therefore, the integral is over a smaller domain than $[\alpha, \sqrt{m}]$, so we can bound it:
$$ \epsilon_N \sqrt{p} + 12\int_{\epsilon_N}^{\sqrt{m}} \sqrt{\log\mathcal{N}(\dots)}d\epsilon \le 4\alpha \sqrt{p} +  12\int_{\alpha}^{\sqrt{m}} \sqrt{\log\mathcal{N}(\dots)}d\epsilon $$
Substituting these into the main inequality:
$$ m\mathcal{R}_S(\mathcal{F}) \le 4\alpha\sqrt{p} + 12 \int_{\alpha}^{\sqrt{m}} \sqrt{\log\mathcal{N}(\mathcal{F}_{|S}, \epsilon, ||\cdot||_2)}d\epsilon $$
Dividing by $m$ gives the bound for our chosen $\alpha$. 
This improves upon the classical bound \cite{bartlett2017spectrally} replacing $4\alpha/\sqrt{m}$ with $4\alpha\sqrt{p}/m$. 
As this holds for any $\alpha > 0$, we may take the infimum to find the tightest bound:
$$ \mathcal{R}_S(\mathcal{F}) \le \inf_{\alpha > 0} \left( \frac{4\alpha\sqrt{p}}{m} + \frac{12}{m}\int_{\alpha}^{\sqrt{m}} \sqrt{\log\mathcal{N}(\mathcal{F}_{|S}, \epsilon, ||\cdot||_2)}d\epsilon \right) $$
This completes the proof.
\end{proof}

\upradmetricB*

\begin{proof}
    Let $({S}, \mathrm{d}_{S})$ be the pseudo-metric subspace with set S and metric $\mathrm{d}_{S}: S\times S \to \mathcal{H}$ where $\mathcal{H}$ is a normed space. Consider the $\varepsilon$-ball $B_{\varepsilon}(\hat{G}_j)$ with centroid $\hat{G}_j$.
    Let $I_j$ be the set of indices of graphs in $B_{\varepsilon}(\hat{G}_j)$: 
    $$
        I_j := \{ i \in [m] : G_i \in B_{\varepsilon}(\hat{G}_j) \}
    $$
    Additionally, let $\lambda$ be the \emph{Lipschitz certificate} of $\mathcal{F_A}$  such that
    $$
        \exists \lambda > 0 : \forall G,H \in \mathcal{G_{S}}, \; | f_\theta(G) - f_\theta(H) | \leq \lambda \cdot d_{S}(G, H)
    $$
    
    The empirical Rademacher can then be re-arranged in terms of the distance of each individual graph to its centroid:
    \begin{align}
        \mathcal{R}_{S}(\mathcal{F_A}) &= \mathbb{E}_{\bm{\sigma}} \left[ \sup_{\Theta}\frac{1}{m} \sum_{i=1}^m \sigma_i f_\theta(G_i) \right] \nonumber\\
        &= \frac{1}{m} \mathbb{E}_{\bm{\sigma}} \left[ \sup_{\Theta} \sum_{j=1}^{p_{\varepsilon}} \sum_{i \in I_j} \sigma_i \left( f_\theta(G_i) - f_\theta(\hat{G}_j) + f_\theta(\hat{G}_j) \right) \right] \nonumber\\
        &= \frac{1}{m} \mathbb{E}_{\bm{\sigma}} \left[ \sup_{\Theta}\left[ \sum_{j=1}^{p_{\varepsilon}} \sum_{i \in I_j} \sigma_i \left( f_\theta(G_i) - f_\theta(\hat{G}_j) \right) + \sum_{j=1}^{p_{\varepsilon}} \sum_{i \in I_j} \sigma_i f_\theta(\hat{G}_j)\right] \right] \nonumber\\
        &= \frac{1}{m} \mathbb{E}_{\bm{\sigma}} \left[ \sup_{\Theta} \left[\sum_{j=1}^{p_{\varepsilon}} \sum_{i \in I_j} \sigma_i \left( f_\theta(G_i) - f_\theta(\hat{G}_j) \right) + \sum_{j=1}^{p_{\varepsilon}} f_\theta(\hat{G}_j) \sum_{i \in I_j} \sigma_i \right] \right] \nonumber\\ \label{eq:radmetric_setup}
    \end{align}
    
    \paragraph*{Upper Bound.}
    First, let $\delta_i = f_\theta(G_i) - f_\theta(\hat{G}_j)$. Then, by the triangle inequality 
    \begin{align}
    \label{eq:first_proof_bm}
        \sum_{i \in I_j} \sigma_i \delta_i \leq \left|\sum_{i \in I_j} \sigma_i \delta_i \right| \leq \sum_{i \in I_j} |\sigma_i \delta_i| \leq \sum_{i \in I_j} |\sigma_i| |\delta_i|
    \end{align}
    
    We know that for each centroid $\hat{G}_j$, the distance to each $G_i \in B_{\varepsilon}(\hat{G}_j)$ is at most $\varepsilon$. By Lipschitz we also know that $\delta_i \leq \lambda d_{S}(G_i, \hat{G}_j) \leq \lambda \varepsilon$. Applying the triangle inequality of the supremum, linearity of expectation, and the previous observations, we can divide the two sums to obtain 
    \begin{align}
    \label{eq:before_up_bound}
    \mathcal{R}_{S}(\mathcal{F_A}) &\leq \frac{1}{m} \mathbb{E}_{\bm{\sigma}} \left[ \sup_{\Theta} \sum_{j=1}^{p_{\varepsilon}} \sum_{i \in I_j} |\sigma_i| \varepsilon \cdot \lambda + \sup_{\Theta} \sum_{j=1}^{p_{\varepsilon}} \sum_{i \in I_j} \sigma_i f_\theta(\hat{G}_j)   \right] \nonumber\\
    &\leq \frac{1}{m} \mathbb{E}_{\bm{\sigma}} \left[ \sup_{\Theta} \sum_{j=1}^{p_{\varepsilon}} \sum_{i \in I_j} |\sigma_i| \varepsilon \cdot \lambda \right] + \frac{1}{m}\mathbb{E}_{\bm{\sigma}} \left[ \sup_{\bm{\Theta}} \sum_{j=1}^{p_{\varepsilon}} \sum_{i \in I_j} \sigma_i f_\theta(\hat{G}_j)   \right] \nonumber\\
    &\leq \frac{\varepsilon \cdot \lambda}{m} \mathbb{E}_{\bm{\sigma}} \left[  \sum_{j=1}^{p_{\varepsilon}} \sum_{i \in I_j} |\sigma_i|  \right] + \frac{1}{m}\mathbb{E}_{\bm{\sigma}} \left[ \sup_{\Theta} \sum_{j=1}^{p_{\varepsilon}} \sum_{i \in I_j} \sigma_i f_\theta(\hat{G}_j)   \right] \nonumber\\
    &\leq \varepsilon \cdot \lambda + \frac{1}{m} \mathbb{E}_{\bm{\sigma}} \left[ \sup_{\Theta} \sum_{j=1}^{p_{\varepsilon}} \sum_{i \in I_j} \sigma_i f_\theta(\hat{G}_j)   \right] \nonumber\\
    &\leq \varepsilon \cdot \lambda + \frac{1}{m} \mathbb{E}_{\bm{\sigma}} \left[ \sup_{\Theta} \sum_{j=1}^{p_{\varepsilon}}  f_\theta(\hat{G}_j) \sum_{i \in I_j} \sigma_i   \right] \nonumber\\
    \end{align}
    
    Let $Z_j = \sum_{i \in I_j} \sigma_i$.
Given that the output of the $f_\theta$ is in $[-1,1]$, then 
${f}_\theta(\hat G_j)Z_j\le |Z_j|$ for all ${f}_\theta\in {\mathcal{F_A}}:$
    \begin{equation*}
     \sup_{\Theta} \sum_{j=1}^{p_{\varepsilon}}  f_\theta(\hat{G}_j) Z_j\le\sum_{j=1}^{p_{\varepsilon}} \left| Z_j\right|   
    \end{equation*}
    
    If we substitute this back into \cref{eq:first_proof_bm}, we obtain the final bound:
    \begin{align*}
        \mathcal{R}_{S} &\leq \varepsilon \cdot \lambda + \frac{1}{m} \mathbb{E}_{\bm{\sigma}} \left[  \sum_{j=1}^{p_{\varepsilon}}  \left| Z_j   \right| \right] \\
        &\leq \varepsilon \cdot \lambda + \frac{1}{m}  \sum_{j=1}^{p_{\varepsilon}} \mathbb{E}_{\bm{\sigma}} \left[   \left| Z_j   \right| \right] \\
        &\leq \varepsilon \cdot \lambda + \frac{1}{m}  \sum_{j=1}^{p_{\varepsilon}} \sqrt{\sum_{i \in I_j} 1} \\
        &= \varepsilon \cdot \lambda + \frac{1}{m}  \sum_{j=1}^{p_{\varepsilon}} \sqrt{|I_j|} \\
        &= \varepsilon \cdot \lambda +   \sum_{j=1}^{p_{\varepsilon}} \sqrt{\frac{\mu_j}{{m}}} \\
    \end{align*}
    
    \paragraph*{Lower Bound.}
    Starting from \cref{eq:radmetric_setup}, let $T_1(\Theta) = \sum_{j=1}^{p_{\varepsilon}} \sum_{i \in I_j} \sigma_i \delta_i$ and $T_2(\Theta) = \sum_{j=1}^{p_{\varepsilon}} f_\theta(\hat{G}_j) \sum_{i \in I_j} \sigma_i$, where $\delta_i = f_\theta(G_i) - f_\theta(\hat{G}_j)$. By the subadditivity of the supremum, we have $\sup_{\Theta} (T_1(\Theta) + T_2(\Theta)) \geq \sup_{\Theta} T_2(\Theta) - \sup_{\Theta} (-T_1(\Theta))$. Furthermore, for any real value, $-x \leq |x|$, meaning we can bound the subtracted term as $\sup_{\Theta} (-T_1(\Theta)) \leq \sup_{\Theta} |T_1(\Theta)|$.
    \begin{align}
        \mathcal{R}_{S}(\mathcal{F_A}) &= \frac{1}{m} \mathbb{E}_{\bm{\sigma}} \left[ \sup_{\Theta} \left( T_1(\Theta) + T_2(\Theta) \right) \right] \nonumber\\
        &\geq \frac{1}{m} \mathbb{E}_{\bm{\sigma}} \left[ \sup_{\Theta} T_2(\Theta) \right] - \frac{1}{m} \mathbb{E}_{\bm{\sigma}} \left[ \sup_{\Theta} |T_1(\Theta)| \right] \nonumber\\
        &\geq - \frac{1}{m} \mathbb{E}_{\bm{\sigma}} \left[ \sup_{\Theta} |T_1(\Theta)| \right] + \frac{1}{m} \mathbb{E}_{\bm{\sigma}} \left[ \sup_{\Theta} T_2(\Theta) \right]  \nonumber\\
    \end{align}
    
    From the Lipschitzianity of $f_\theta$ and the $\varepsilon$-ball covering of each centroid, we know that $|\delta_i| \leq \lambda d_{S}(G_i, \hat{G}_j) \leq \lambda \varepsilon$. Substituting this absolute value upper bound back into the inequality with $T_1(\Theta)$, we obtain:
    \begin{align}
        \mathcal{R}_{S}(\mathcal{F_A}) &\geq - \frac{1}{m} \mathbb{E}_{\bm{\sigma}} \left[ \sup_{\Theta} |T_1(\Theta)| \right] + \frac{1}{m} \mathbb{E}_{\bm{\sigma}} \left[ \sup_{\Theta} T_2(\Theta) \right]  \nonumber\\
        &\geq - \frac{1}{m} \mathbb{E}_{\bm{\sigma}} \left[ \sup_{\Theta} \left| \sum_{j=1}^{p_{\varepsilon}} \sum_{i \in I_j} \sigma_i \delta_i \right| \right] + \frac{1}{m} \mathbb{E}_{\bm{\sigma}} \left[ \sup_{\Theta} T_2(\Theta) \right]  \nonumber\\
        &\geq - \frac{1}{m} \mathbb{E}_{\bm{\sigma}} \left[ \sup_{\Theta} \sum_{j=1}^{p_{\varepsilon}} \sum_{i \in I_j} \lambda \varepsilon \right] + \frac{1}{m} \mathbb{E}_{\bm{\sigma}} \left[ \sup_{\Theta} T_2(\Theta) \right]  \nonumber\\
        &\geq  - \lambda \varepsilon + \frac{1}{m} \mathbb{E}_{\bm{\sigma}} \left[ \sup_{\Theta} T_2(\Theta) \right] \nonumber\\
    \end{align}
    
    If we similarly let $Z_j=\sum_{i \in I_j} \sigma_i$, then $T_2(\Theta)$ is the inner sum $\sum_{j=1}^{p_{\varepsilon}} f_\theta(\hat{G}_j) Z_j$. Under the assumption that $\mathrm{d}$ is a $T-$equivalent metric (see \Cref{def:T-equivalent}), the function class is able to assign an arbitrary output $f_\theta(\hat{G}_j)$ for each centroid. {Now, we assume by hypothesis that $\mathcal{F_A}$ contains maximally expressive functions, that is functions that match the expressive power of the bounding invariant.}
    Thus, the function that maximizes the sum is the one whose output perfectly aligns with the sign of the Rademacher noise sum, i.e. $f_\theta(\hat{G}_j) = \text{sign}(Z_j)$. Using this and the linearity of expectation, we obtain:
    \begin{align}
        \frac{1}{m} \mathbb{E}_{\bm{\sigma}} \left[ \sup_{\Theta} T_2(\Theta) \right] &= \frac{1}{m} \mathbb{E}_{\bm{\sigma}} \left[ \sum_{j=1}^{p_{\varepsilon}} |Z_j| \right] \nonumber\\
        &= \frac{1}{m} \mathbb{E}_{\bm{\sigma}} \left[ \sum_{j=1}^{p_{\varepsilon}} |\sum_{i \in I_j} \sigma_i| \right]\nonumber\\
        &= \frac{1}{m} \sum_{j=1}^{p_{\varepsilon}} \mathbb{E}_{\bm{\sigma}} \left[ |\sum_{i \in I_j} \sigma_i| \right] \nonumber\\
    \end{align}
    Using Khintchine's inequality, we can bound the expected value \cite{haagerup1981best}:
    \begin{align}
        \mathbb{E}_{\bm{\sigma}} \left[ |\sum_{i \in I_j} \sigma_i| \right] &\geq \frac{1}{\sqrt{2}}\sqrt{\sum_{i \in I_j} 1^2} \nonumber\\
        &\geq \sqrt{\frac{|I_j|}{2}} \nonumber\\
    \end{align}
    Substituting this back into the main expression yields the final lower bound:
    \begin{align}
        \mathcal{R}_{S}(\mathcal{F_A}) &\geq - \lambda \varepsilon + \frac{1}{m} \mathbb{E}_{\bm{\sigma}} \left[ \sup_{\Theta} T_2(\Theta) \right] \nonumber\\
        &= - \lambda \varepsilon + \frac{1}{m} \sum_{j=1}^{p_{\varepsilon}} \mathbb{E}_{\bm{\sigma}} \left[ |\sum_{i \in I_j} \sigma_i| \right] \nonumber\\
        &\geq - \lambda \varepsilon + \frac{1}{m} \sum_{j=1}^{p_{\varepsilon}} \sqrt{\frac{|I_j|}{2}} \nonumber\\
        &= - \lambda \varepsilon + \sum_{j=1}^{p_{\varepsilon}} \sqrt{\frac{\mu_j^{\varepsilon}}{2m}} \nonumber\\
    \end{align}
\end{proof}

\subsection{Stability}\label{app:stability-proofs}

\begin{lemma}
\label{lemma:sup-difference}
Let $X$ be a nonempty set and let $f,g:X\to\mathbb{R}$ be two real–valued functions.  Then
\begin{equation}
\bigl|\sup_{x\in X}f(x)\;-\;\sup_{x\in X}g(x)\bigr|
\le\sup_{x\in X}\bigl|f(x)-g(x)\bigr|.
\end{equation}
\end{lemma}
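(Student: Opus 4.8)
The plan is to prove the two-sided inequality by establishing the symmetric pair of bounds $\sup_X f - \sup_X g \le \sup_X |f-g|$ and $\sup_X g - \sup_X f \le \sup_X |f-g|$, and then combining them. Since the two statements are identical after swapping the roles of $f$ and $g$, it suffices to prove only the first one and invoke symmetry.

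For the first bound, fix an arbitrary $x \in X$. The plan is to write $f(x) = g(x) + (f(x) - g(x)) \le g(x) + |f(x)-g(x)| \le \sup_{y\in X} g(y) + \sup_{y\in X}|f(y)-g(y)|$, where the last step uses that $g(x)$ is dominated by its supremum and $|f(x)-g(x)|$ by the supremum of $|f-g|$. The right-hand side is now a constant independent of $x$, so it is an upper bound for $f$ on all of $X$; taking the supremum over $x$ on the left gives $\sup_X f \le \sup_X g + \sup_X |f-g|$, i.e.\ $\sup_X f - \sup_X g \le \sup_X |f-g|$. By exchanging $f$ and $g$ we get $\sup_X g - \sup_X f \le \sup_X |f-g|$, and the absolute value of the difference is then bounded by $\sup_X |f-g|$ as claimed.

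One technical caveat worth addressing is the possibility that some of these suprema are $+\infty$. If $\sup_X |f-g| = +\infty$ the inequality is trivial, so assume it is finite; then for every $x$ one has $f(x) \le g(x) + \sup_X|f-g|$, which forces $\sup_X f \le \sup_X g + \sup_X|f-g|$ with the convention that $(+\infty) - (+\infty)$ does not arise because $\sup_X f$ and $\sup_X g$ are simultaneously finite or infinite under this hypothesis. I would either state the result for bounded functions (which is all the paper needs, since it is applied with $f,g$ of the form $\frac1m\sum_j \sigma_j f(G_j;\Theta)$ taking values in a bounded range) or add a one-line remark handling the infinite case; the cleaner exposition is simply to note $X$ is nonempty so the suprema are well defined in $[-\infty,+\infty]$ and the chain of inequalities above is valid verbatim.

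The main obstacle here is essentially nonexistent — this is a standard and elementary fact about suprema — so the only "difficulty" is presentational: making sure the quantifier order is handled correctly, namely that one bounds $f(x)$ pointwise by a single $x$-independent constant \emph{before} taking the supremum over $x$, rather than trying to manipulate the two suprema directly. I would present it as a short two-line argument with the symmetry remark, and not belabor the edge cases.
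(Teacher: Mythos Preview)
Your proof is correct and is the standard argument for this elementary fact about suprema. The paper itself states this lemma without proof, treating it as a well-known auxiliary fact before using it in the proof of Proposition~\ref{prop-rc-lipschitz-bound}, so there is no paper proof to compare against; your two-line pointwise bound plus symmetry is exactly what one would expect here.
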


We are now ready to prove \cref{thm: stability}.

\stability*\label{app:proof-stability}
\begin{proof}
We begin by proving the stability of the Rademacher complexity in the case of a discrete pseudo-metric, such as $\mathrm{1-WL}$.
\paragraph{Proof in the discrete setting.}
Let $T : \mathcal{G} \to \mathcal
C$ be a graph invariant that induces a partition of $S$ and $S'$ into equivalence classes. Let $c_j$ be the unique value assigned by $T$ to all graphs belonging to class $C_j$. Let $\mathcal{C}(S)$ be the set of labels assigned by $T$ to the graphs in $S$, and $\mathcal{C}(S')$ is similarly defined. We set 
\begin{equation}\label{eq:partition-multiplicity}
I_j(S) := \{\, i \in [m] : G_i\in S \text{ and } T(G_i) = c_j \,\},
\end{equation}
and similarly for $S'$.
Since the architecture $\mathcal{A}$ is as good as $T$ in distinguishing graphs, the output of $f_\theta \in \mathcal{F_A}$ is assumed to be identical for all graphs in the same partition, i.e. $f_\theta(G_i)=f_\theta(G_k)$ for any $i,k \in I_j$. 
The empirical Rademacher complexity can be expressed in terms of equivalence classes induced by the graph invariant as:
\begin{equation}\label{eq:a}
\mathcal{R}_{S}(\mathcal{F_A}) := \mathbb{E}_{\bm{\sigma}}\left[\sup_{\Theta}\frac{1}{m}\sum_{i=1}^{m}\sigma_i f_\theta(G_i)\right]= \mathbb{E}_{\bm{\sigma}}\left[\sup_{\Theta}\frac{1}{m}\sum_{c_j\in\mathcal{C}(S)}f_j(\Theta)\sum_{i\in I_j(S)} \sigma_i\right],
\end{equation}

where$f_j(\Theta) $ is the constant output for any graph in partition $C_j$. An analogous expression holds for $S'$. 

To compare the empirical Rademacher Complexity of two different samples $S$ and $S'$, we require the definition to be invariant under permutations of the graph indices (otherwise, the Rademacher complexity of the same sample could change simply by reordering its elements). To ensure this invariance, we rewrite $\sum_{i\in I_j(S)} \sigma_i$ in terms of labels $c_j$ and their multiplicity, that is $n_j(S)=|I_j(S)|$:
$$\sum_{i\in I_j(S)} \sigma_i= \sum_{i=1}^{n_j(S)} \sigma_{j,i}$$where for every color $c_j$, the sequence $(\sigma_{j,i})_{i\geq 1}$ is shared across samples (with each sample using only the first $n_{j}(S)$ terms, depending on its multiplicity).
Additionally, we re-index \cref{eq:a} over the whole set of invariant labels $\mathcal{C}$.
\begin{equation}
\left\vert\mathcal{R}_{S}(\mathcal{F_A})-\mathcal{R}_{S'}(\mathcal{F_A}) \right\vert \le \mathbb{E}_{\bm\sigma}
\left\vert\sup_{\Theta}\frac{1}{m}\sum_{c_j\in\mathcal{C}}f_j(\Theta)\sum_{i=1}^{n_j(S)} \sigma_{j,i} - \sup_{\Theta}\frac{1}{m}\sum_{c_j\in\mathcal{C}}f_j(\Theta)\sum_{i=1}^{n_j(S')} \sigma_{j,i}\right\vert.
\end{equation}
Note that if a specific equivalence class $c$ occurs only in one sample, its multiplicity is zero in the other, so the corresponding contribution vanishes.
Now by \cref{lemma:sup-difference}, we can upper-bound the difference of suprema by the supremum of the differences, yielding: 
\begin{equation}\label{eq:diff}
\begin{aligned}
\left\vert\mathcal{R}_{S}(\mathcal{F_A})-\mathcal{R}_{S'}(\mathcal{F_A}) \right\vert &\leq \mathbb{E}_{\bm{\sigma}} \left[\sup_{\Theta}\left\vert\frac{1}{m}\sum_{c_j\in\mathcal{C}}f_j(\Theta)\sum_{i=1}^{n_j(S)} \sigma_{j,i} - \frac{1}{m}\sum_{c_j\in\mathcal{C}}f_j(\Theta)\sum_{i=1}^{n_j(S')} \sigma_{j,i} \right\vert\right] \\ 
&\leq \mathbb{E}_{\bm{\sigma}} \left[\sup_{\Theta}\frac{1}{m}\sum_{c_j\in\mathcal{C}}\left\vert f_j(\Theta)\right\vert\left\vert \sum_{i=1}^{n_j(S)} \sigma_{j,i} -  \sum_{i=1}^{n_c(S')} \sigma_{j,i}  \right\vert \right].  
\end{aligned}
\end{equation}
Now, set $\min{n_j} := \min(n_j(S),n_j(S'))$ and $\max\,{n_j} := \max(n_j(S),n_j(S'))$ and, given that the Rademacher sequence $(\sigma_{j,i})_{i\geq 1}$ is shared across samples, we can write: $$\left\vert \sum_{i=1}^{n_j(S)} \sigma_{j,i} -  \sum_{i=1}^{n_j(S')} \sigma_{j,i} \right\vert= \left\vert \sum_{i=min\,{n_j}+1}^{max\,{n_j}}\sigma_i\right\vert \leq \left\vert n_j(S) - n_j(S')\right\vert$$
Hence, the bound in Eq.\ref{eq:diff} can be rewritten in terms of color multiplicities, independently of the $\sigma_i$'s: 

$$\left\vert\mathcal{R}_{S}(\mathcal{F_A})-\mathcal{R}_{S'}(\mathcal{F_A}) \right\vert 
\leq  \sup_{f\in \mathcal{F_A}}\frac{1}{m}\sum_{c_j\in\mathcal{C}}\left\vert f_\theta(G_j)\right\vert\left\vert n_j - n'_j\right\vert.$$

The final bound is obtained using the fact that $\sup\limits_{f\in \mathcal{F_A}}\vert f_\theta(G)\vert \leq 1$:
$$ \left\vert\mathcal{R}_{S}(\mathcal{F_A})-\mathcal{R}_{S'}(\mathcal{F_A}) \right\vert
\leq \frac{1}{m}\sum_{c_j\in\mathcal{C}}\left\vert n_j - n'_j\right\vert .$$

\paragraph{Proof in the pseudo-metric space}
Now we generalize the proof to the case where $S$ is endowed with a continuous pseudo-metric $d$, and the function class is $\lambda-$Lipschitz with respect to this metric. Note that the results established above did not require the Lipschitzianity assumption, since the $T-$metric is inherently tied to the architecture. \\

Let $S=\{G_1, \cdots, G_m\}$, $S'=\{G'_1, \cdots, G'_m\}$ be two datasets drawn from the same distribution $\mathcal{D}^m$. Let $\pi:[m]\to [m]$ be an arbitrary permutation of the indices. We denote by $\pi(S)$ the permuted sample
$\pi(S) := \{G_{\pi(1)},\ldots, G_{\pi(m)}\}$.
The core intuition of the proof relies on the permutation invariance of the Rademacher complexity, that is $\mathcal{R}_S=\mathcal{R}_{\pi(S)}$, because the expectation is taken over independent and identically distributed (i.i.d.) random variables $\sigma$ the order of the graphs in the dataset does not affect the expected supremum.
\begin{align}
    \left\vert\mathcal{R}_{S}(\mathcal{F_A})-\mathcal{R}_{S'}(\mathcal{F_A}) \right\vert=\left\vert\mathcal{R}_{S}(\mathcal{F_A})-\mathcal{R}_{\pi(S')}(\mathcal{F_A}) \right\vert = \\\left\vert\mathbb{E}_{\bm{\sigma}} \left[ \sup_{\Theta}\frac{1}{m} \sum_{i=1}^m \sigma_i f_\theta(G_i) \right] - \mathbb{E}_{\bm{\sigma}} \left[ \sup_{\Theta}\frac{1}{m} \sum_{i=1}^m \sigma_{\pi(i)} f_\theta(G'_{\pi(i)}) \right] \right\vert&\nonumber.
\end{align}
Given that the $\sigma'$s don't depend on the order of the graphs, we can align $G_i$ and $G'_{\pi(i)}$ under the exact same Rademacher variable $\sigma_i$, which can then be factored out:
\begin{equation*}
    \left\vert\mathcal{R}_{S}(\mathcal{F_A})-\mathcal{R}_{S'}(\mathcal{F_A}) \right\vert 
\le \frac{1}{m} \left\vert\mathbb{E}_{\bm{\sigma}} \left[ \sup_{\Theta}\left(\sum_{i=1}^m \sigma_i \left(f_\theta(G_i)  -  f_\theta(G'_{\pi(i)})\right)\right) \right] \right\vert
\end{equation*}
The $\le$ comes from the subadditivity of the supremum.
By triangular inequality we get:
\begin{align*}
    \left\vert\mathcal{R}_{S}(\mathcal{F_A})-\mathcal{R}_{S'}(\mathcal{F_A}) \right\vert 
\le \frac{1}{m} \mathbb{E}_{\bm{\sigma}} \left[ \sup_{\Theta}\left\vert\sum_{i=1}^m \sigma_i \left(f_\theta(G_i)  -  f_\theta(G'_{\pi(i)})\right)\right\vert\right] \\
\le \frac{1}{m} \mathbb{E}_{\bm{\sigma}} \left[ \sup_{\Theta}\left(\sum_{i=1}^m |\sigma_i| |f_\theta(G_i)  -  f_\theta(G'_{\pi(i)})|\right) \right] 
&
\end{align*}
Now, given that the $|\sigma_i|=1$ for all $i$, and by $\lambda-$Lipschitzianity of $f$, we get:
\begin{equation*}
\left\vert\mathcal{R}_{S}(\mathcal{F_A})-\mathcal{R}_{S'}(\mathcal{F_A}) \right\vert 
\le \frac{1}{m}\sum_{i=1}^m \lambda d(G_i,G_{\pi(i)}).
\end{equation*}
Given that this bound holds for any permutation $\pi$, it's valid also for the one that minimizes the distances between the graphs in $S$ and $S'$. This corresponds to the solution to the optimal transport problem.
\begin{equation*}
\left\vert\mathcal{R}_{S}(\mathcal{F_A})-\mathcal{R}_{S'}(\mathcal{F_A}) \right\vert 
\le \min_{\pi}{\frac{1}{m}\sum_{i=1}^m \lambda d(G_i,G_{\pi(i)})}.
\end{equation*}
This last term is exactly the $1$-Wasserstein distance between the uniform empirical measure supported on the two datasets.
\end{proof}
\stabilitygeneralization*\label{app:proof-cross-dataset}
\begin{proof} By \cite[Theorem 3.3]{mohri2008rademacher}, the generalization gap on the target sample $S'$ is bounded by:
\begin{equation*}
    e^{gen}(f, S') := L(f) - L_{S'}(f) \le 2\gamma \mathcal{R}_{S'}(\mathcal{F}_{\mathcal{A}}) + 3\sqrt{\frac{\ln(2/\delta)}{2m}}.
\end{equation*}
From \Cref{thm: stability}, we know that the difference in empirical Rademacher complexities is bounded by the Wasserstein distance over the quotient space: $|\mathcal{R}_{S'}(\mathcal{F}_{\mathcal{A}}) - \mathcal{R}_S(\mathcal{F}_{\mathcal{A}})| \le \alpha W_1(S, S')$. Considering only the upper bound of the absolute value, we have $\mathcal{R}_{S'}(\mathcal{F}_{\mathcal{A}}) \le \mathcal{R}_S(\mathcal{F}_{\mathcal{A}}) + \alpha W_1(S, S')$. Substituting this into the generalization bound yields the result. 
\end{proof}
 \section{Rademacher bounds on the quotient space}\label{app:quotient}A graph invariant $T$ naturally induces an equivalence relation $\equiv_T$ over the space of graphs $\mathcal{G}$, such that $G\equiv_TG' \iff T(G)=T(G')$. Consequently, this relation partitions any finite sample $S=\{G_1, \ldots,G_m\}$ into distinct equivalence classes, where each class $C_j=\{G\in S \mid T(G)=c_j\}$ corresponds to a unique invariant output $c_j$. The quotient space is defined as the set of these equivalent classes, denoted by  $S/_{\equiv_T}=\{C_1, \ldots, C_p\}$.
For readability, in the following we refer to the quotient space $S/_{\equiv_T}$ as $S/_{{\equiv}}$.
Let $\mathcal A$ be a GNN architecture whose expressive power is upper-bounded by this invariant (i.e., $\mathcal{A}\preceq T$). By definition, this expressivity constraint implies that any function $f\in \mathcal{F_A}$ is constant over graphs belonging to the same equivalence class. 

\paragraph{Functions and metrics induced in the quotient space.}

We begin by formally defining the action of our hypothesis class on the quotient space.
\begin{definition}[Induced quotient function]
Let $\equiv_T$ be the equivalence relation induced by $T$, and let $S/_{{\equiv}}= \{C_1, \ldots, C_p\}$ be the corresponding quotient space consisting of $p$ equivalence classes. For any $f_\theta \in \mathcal{F}_{\mathcal{A}}$, we define the induced function $\bar{f}_\theta: S/_{{\equiv}} \to [-1,1]$ as:
$$ \bar{f}_\theta(C_j) = f_\theta(G), \quad \text{for any } G \in C_j. $$
\end{definition}
The function $\bar{f}_\theta$ is well-defined because $f_\theta$ is constant over the equivalence class $C_j$, making the output independent of the chosen representative of the class. We call $\bar{\mathcal{F_A}}:=\{\bar{f}_\theta:  f_\theta \in \mathcal{F_A} \}$

Next, we assume the original space $\mathcal{G}$ is endowed with a pseudo-metric $d$ such that $\mathcal{F}_{\mathcal{A}}$ is Lipschitz with certificate $\lambda$. That is, for all $f_\theta\in \mathcal{F_A}$:
$$ |f_\theta(G) - f_\theta(G')| \le \lambda \cdot d(G, G') \quad \forall G, G' \in \mathcal{G}. $$
We can equip the quotient space $S/_{{\equiv}}$ with an induced pseudo-metric  
$\bar d$ under which the Lipschitz continuity is preserved. The following lemma guarantees that such valid pseudo-metrics exist.

\begin{lemma}[Lipschitz continuity on quotient metrics]\label{lemma:quotient_lipschitz}
Let $d$ be a pseudo-metric on $\mathcal{G}$ under which $f_\theta \in \mathcal{F}_{\mathcal{A}}$ is $\lambda$-Lipschitz. Let $\bar{d}: S/_{\equiv_T} \times S/_{\equiv_T} \to \mathbb{R}_{\ge 0}$ be a pseudo-metric defined on the quotient space such that for any two classes $C_i, C_j \in S/_{\equiv_T}$:
$$ \bar{d}(C_i, C_j) := \max\{\max_{G \in C_i}\min_{G' \in C_j} d(G, G'),\max_{G' \in C_j}\min_{G \in C_i} d(G, G')\}. $$
That is, the Hausdorff distance between the equivalence classes. Then, the induced function $\bar{f}_\theta$ is $\lambda$-Lipschitz on the quotient space with respect to $\bar{d}$:
$$ |\bar{f}_\theta(C_i) - \bar{f}_\theta(C_j)| \le \lambda \cdot \bar{d}(C_i, C_j). $$
\end{lemma}

\begin{proof}
By the definition of the induced function $\bar{f}_\theta$ and the Lipschitz continuity of $f_\theta$ on the original space, we have:
$$ |\bar{f}_\theta(C_i) - \bar{f}_\theta(C_j)| = |f_\theta(G) - f_\theta(G')| \le \lambda \cdot d(G, G') $$
for any arbitrary choice of representatives $G \in C_i$ and $G' \in C_j$. Since this inequality holds for \textit{all} pairs in the Cartesian product $C_i \times C_j$, it must strictly hold for the infimum of the distances:
$$ |\bar{f}_\theta(C_i) - \bar{f}_\theta(C_j)| \le \lambda \cdot \min_{G \in C_i, G' \in C_j} d(G, G')\le \lambda \cdot \bar{d}(C_i, C_j). $$
By definition, the minimum pairwise distance between two sets is a lower bound for their Hausdorff distance, yielding the result. 

\end{proof}

\begin{remark}\label{rem:metrics_quotient}
    Since we evaluate the empirical Rademacher complexity on a finite sample S, the equivalence classes are finite sets. Consequently, the infimum distance between any two classes is always attained as a minimum. Valid alternative candidates for the quotient pseudo-metric $\bar d$ that satisfy Lemma \ref{lemma:quotient_lipschitz} are the Hausdorff distance between sets, or the average and maximum pairwise distances, as all of these are inherently bounded below by the minimum pairwise distance.
\end{remark} 
\subsection{Bounding the Rademacher complexity}
Having established the metric structure, we can now bound the empirical Rademacher complexity $\mathcal{R}_S(\mathcal{F}_{\mathcal{A}})$ using the covering numbers of the quotient space $S/_{\equiv_T}$.

\begin{theorem}\label{app:bound_quotient}
\label{thm:quotient_bounds}
Let $S = \{G_1, \dots, G_m\}$ be a sample of $m$ graphs, and let $\mathcal{F}_{\mathcal{A}}$ be the hypothesis class of a GNN architecture bounded by a graph invariant $T$ (i.e., $\mathcal{A} \le T$), with outputs in $[-1, 1]$. 
Let $(S/_{\equiv},\bar{d})$ be the quotient space induced by the invariant $T$ and equipped with a pseudo-metric $\bar{d}$, under which the induced hypothesis class $\bar{\mathcal{F}}_{\mathcal{A}}$ is $\lambda$-Lipschitz. Let $\bar{p}_\varepsilon = \mathcal{N}(S/_{\equiv_T}, \varepsilon, \bar{d})$ denote the covering number of the quotient space at radius $\varepsilon > 0$, and let $\bar \mu_k^\varepsilon$ be the empirical frequency of graphs in $S$ mapped to the $k$-th partition of the $\varepsilon$-cover of $S/_\equiv$. 

Then, the empirical Rademacher complexity of $\mathcal{F}_{\mathcal{A}}$ is upper bounded by:
$$ \mathcal{R}_S(\mathcal{F}_{\mathcal{A}}) \le \sum_{k=1}^{\bar{p}_\varepsilon} \sqrt{\frac{\bar \mu_k^\varepsilon}{m}} + \lambda \varepsilon. $$
Furthermore, if $\mathcal{F}_{\mathcal{A}}$ contains maximally expressive functions matching the distinguishing power of $T$, the complexity is tightly bounded from both sides:
$$ \sum_{k=1}^{\bar{p}_\varepsilon} \sqrt{\frac{\bar \mu_k^\varepsilon}{2m}} - \lambda \varepsilon \le \mathcal{R}_S(\mathcal{F}_{\mathcal{A}}) \le \sum_{k=1}^{\bar{p}_\varepsilon} \sqrt{\frac{\bar \mu_k^\varepsilon}{m}} + \lambda \varepsilon. $$
\end{theorem}
    
\begin{proof} 
Let $\{C_1, \dots, C_p\}$ be the $p$ distinct equivalence classes present in the sample $S$. We can rewrite the Rademacher complexity by projecting each $G_i \in S$ onto its class $C_j \in S/_{\equiv_T}$ . Using the property that if $G_i \in C_j$ then $f_\theta(G_i) = \bar{f}_\theta(C_j)$ , we have:
    $$\mathcal R_S(\mathcal F_{\mathcal A})
=
\frac{1}{m}
\mathbb E_\sigma\sup_{\Theta}\sum_{i=1}^m \sigma_i f_\theta(G_i)
= \frac{1}{m}
\mathbb E_\sigma
\sup_{\Theta}\sum_{j=1}^p \sum_{i\in I_j}\sigma_i\bar{f}_\theta(C_j),$$
where $I_j=\{i \in[m]: G_i \in C_j \}$. \\
Let
$\{\hat C_1,\dots,\hat C_{\bar p_\varepsilon}\}$ be
a $\varepsilon$-cover of $S_{\diagup_{\equiv}}$ with respect to the induced metric $\bar d$, with
$\bar p_\varepsilon=N(S_{\diagup_{\equiv}},\varepsilon, \bar d)$.
For every class $C_j$ contained in the ball $B_\varepsilon(\hat{C}_k)$, we decompose the function value as $\bar{f}_\theta(C_j) = (\bar{f}_\theta(C_j) - \bar{f}_\theta(\hat{C}_k)) + \bar{f}_\theta(\hat{C}_k)$. Let $\bar{I}_k^\varepsilon = \{j \in [p] : C_j \in B_\varepsilon(\hat{C}_k)\}$ denote the set of classes assigned to centroid $\hat{C}_k$.
We can thus cluster the sum over the balls of the $\varepsilon$-cover:

$$\sum_{j=1}^p \sum_{i\in I_j}\sigma_i\bar{f}_\theta(C_j) = \sum_{k=1}^{\bar p_\varepsilon}\sum_{j\in \bar I^\varepsilon_k} \sum_{i\in I_j}\sigma_i \left(\bar{f}_\theta(C_j)-\bar{f}_\theta(\hat C_k)+\bar{f}_\theta(\hat C_k)\right)$$
where, similarly, $\bar I^\varepsilon_k=\{j \in[p]: C_j\in B_\varepsilon(\hat C_k) \}$. 
Hence, \begin{align}\label{ref:eq_quotient}
    \mathcal R_S(\mathcal F_{\mathcal A})&=\frac{1}{m}
\mathbb E_\sigma
\sup_{\bar{f}_\theta}\sum_{k=1}^{\bar p_\varepsilon}\sum_{j\in \bar I^\varepsilon_k} \sum_{i\in I_j}\sigma_i \left(\bar{f}_\theta(C_j)-\bar{f}_\theta(\hat C_k)+\bar{f}_\theta(\hat C_k)\right)
\end{align}
\paragraph{Upper bound.}
By the subadditivity of the supremum and the linearity of the expectation, we upper bound the complexity by splitting it into two terms, $T_1$ and $T_2$:
\begin{align}
\mathcal{R}_S(\mathcal{F}_{\mathcal{A}}) &\le \frac{1}{m} \mathbb{E}_\sigma \left[ \sup_{\bar{f}_\theta} \sum_{k=1}^{\bar{p}_\varepsilon} \sum_{j \in \bar{I}_k^\varepsilon} \sum_{i \in I_j} \sigma_i \left( \bar{f}_\theta(C_j) - \bar{f}_\theta(\hat{C}_k) \right) \right] \nonumber \\
&\quad + \frac{1}{m} \mathbb{E}_\sigma \left[ \sup_{\bar{f}_\theta} \sum_{k=1}^{\bar{p}_\varepsilon} \sum_{j \in \bar{I}_k^\varepsilon} \sum_{i \in I_j} \sigma_i \bar{f}_\theta(\hat{C}_k) \right] \\
&= T_1 + T_2. \nonumber
\end{align}
We first bound $T_1$
\begin{align*}
    T_1=& \frac{1}{m}
\mathbb E_\sigma \sup_{\bar{f}_\theta}\sum_{k=1}^{\bar p_\varepsilon}\sum_{j\in \bar I^\varepsilon_k} \sum_{i\in I_j}\sigma_i \left(\bar{f}_\theta(C_j)-\bar{f}_\theta(\hat C_k)\right)\\
\le &  \left|\frac{1}{m}
\mathbb E_\sigma \sup_{\bar{f}_\theta}\sum_{k=1}^{\bar p_\varepsilon}\sum_{j\in \bar I^\varepsilon_k} \sum_{i\in I_j}\sigma_i \left(\bar{f}_\theta(C_j)-\bar{f}_\theta(\hat C_k)\right)\right|\\
\le & \frac{1}{m}
\mathbb E_\sigma \sup_{\bar{f}_\theta}\sum_{k=1}^{\bar p_\varepsilon}\sum_{j\in \bar I^\varepsilon_k} \sum_{i\in I_j}\vert\sigma_i (\bar{f}_\theta(C_j)-\bar{f}_\theta(\hat C_k))\vert\\
\le & \frac{1}{m}
\mathbb E_\sigma \sup_{\bar{f}_\theta}\sum_{k=1}^{\bar p_\varepsilon}\sum_{j\in \bar I^\varepsilon_k} \sum_{i\in I_j}\vert\sigma_i \vert \vert\bar{f}_\theta(C_j)-\bar{f}_\theta(\hat C_k)\vert
\end{align*}
Due to the Lipschitzianity of the function $\bar{f}_\theta$ with respect to $\bar d$ (established in \Cref{lemma:quotient_lipschitz}) and because the distance between a class $C_j$ and its centroid $\hat C_k$ is at most $\varepsilon$ we get that :
\begin{align*}
T_1\le & \frac{1}{m}
\mathbb E_\sigma \sup_{\bar{f}_\theta}\sum_{k=1}^{\bar p_\varepsilon}\sum_{j\in \bar I^\varepsilon_k} \sum_{i\in I_j}\vert\sigma_i \vert \vert\bar{f}_\theta(C_j)-\bar{f}_\theta(\hat C_k)\vert\\
= & \frac{1}{m}
\mathbb E_\sigma \sum_{k=1}^{\bar p_\varepsilon}\sum_{j\in \bar I^\varepsilon_k} \sum_{i\in I_j}\vert\sigma_i \vert \lambda \varepsilon\\
=& \frac{\lambda \varepsilon}{m}
\mathbb E_\sigma \sum_{i=1}^m\vert\sigma_i \vert = \lambda \varepsilon.
\end{align*}
Therefore, the updated bound on the Rademacher complexity is:
$$\mathcal{R}_S(\mathcal{F_A})\leq \lambda \varepsilon + T_2.$$
Now we proceed bounding the second term $T_2$.\\
Let $U_k$ be the union of indices of graphs in the classes which are contained in the ball $B_\varepsilon(\hat C_k)$, formally $U_k:=\bigcup\limits_{j\in \bar I^\varepsilon_k} I_j$. Then,
$$T_2:= \frac{1}{m}
\mathbb E_\sigma \sup_{\bar{f}_\theta}\sum_{k=1}^{\bar p_\varepsilon}\sum_{j\in \bar I^\varepsilon_k} \sum_{i\in I_j}\sigma_i\bar{f}_\theta(\hat C_k) = \frac{1}{m}
\mathbb E_\sigma \sup_{\bar{f}_\theta}\sum_{k=1}^{\bar p_\varepsilon}\sum_{i\in U_k} \sigma_i\bar{f}_\theta(\hat C_k).$$
Let $Z_k:=\sum\limits_{i\in U_k} \sigma_i$. 
Given that the output of the $\bar f_\theta$ is in $[-1,1]$, then 
$Z_k\bar{f}_\theta(\hat C_k)\le |Z_k|$ for all $\bar{f}_\theta\in \bar{\mathcal{F_A}}$: 
\begin{equation*}
    \frac{1}{m}
    \mathbb E_\sigma \sup_{\bar{f}_\theta}\sum_{k=1}^{\bar p_\varepsilon}Z_k\bar{f}_\theta(\hat C_k)
    \le \frac{1}{m}
    \mathbb E_\sigma \sum_{k=1}^{\bar p_\varepsilon}|Z_k|
    \end{equation*}

By linearity of the expectation and by Jensen's inequality we get that $\mathbb{E}_\sigma |Z_k|=\mathbb{E}_\sigma \sqrt{Z_k^2}
\leq
\sqrt{\mathbb{E}_\sigma Z_k^2}$. Since the Rademacher variables are independent with $\mathbb{E}_\sigma[\sigma_i] = 0$, we have that   $\mathbb{E}_\sigma [Z_k^2] = \mathbb{E}_\sigma [\sum\limits_{i\in U_k} \sigma_i^2 ]=\mathbb{E}_\sigma \sum\limits_{i\in U_k}1 = |U_k|$. Substituting back in $T_2$ yields:
\begin{align*}
    T_2 &=\frac{1}{m}
\mathbb E_\sigma \sum_{k=1}^{\bar p_\varepsilon}|Z_k|\\
&=\frac{1}{m}
 \sum\limits_{k=1}^{\bar p_\varepsilon}\mathbb E_\sigma|Z_k|\\
&\le \frac{1}{m}
 \sum\limits_{k=1}^{\bar p_\varepsilon}\sqrt{\mathbb E_\sigma Z_k^2}\\
 &= \sum\limits_{k=1}^{\bar p_\varepsilon} \frac{\sqrt{|U_k|}}{m}.
\end{align*}
Combining the bounds for $T_1$ and $T_2$, we obtain the upper bound on the Rademacher complexity:
\begin{equation}
\mathcal{R}_S(\mathcal{F}_{\mathcal{A}}) \le \sum_{k=1}^{\bar{p}_\varepsilon} \frac{\sqrt{|U_k|}}{m} + \lambda \varepsilon.
\end{equation}

    \paragraph*{Lower Bound.}
    Starting from \ref{ref:eq_quotient}, we split the inner sum into two terms, $L_1$ and $L_2$:
    \begin{align*}
        \mathcal{R}_S(\mathcal{F_A})=& \frac{1}{m} \mathbb{E}_{\bm{\sigma}} \left[ \sup_\Theta \left(\sum_{k=1}^{\bar p_\varepsilon}\sum_{j\in \bar I^\varepsilon_k} \sum_{i\in I_j} \sigma_i (\bar{f}_\theta(C_j) - \bar{f}_\theta(\hat C_k)) + \sum_{k=1}^{\bar p_\varepsilon} \sum_{j\in \bar I^\varepsilon_k} \sum_{i \in I_j} \bar{f}_\theta(\hat{C}_k)\sigma_i\right)\right]\\
        =& \frac{1}{m} \mathbb{E}_{\bm{\sigma}} [\sup_\Theta(L_1 + L2)].
    \end{align*}
    By the subadditivity of the supremum, we have $\sup (L_1 + L_2) \geq \sup L_2 - \sup (-L_1)$. Furthermore, for any real value, $-x \leq |x|$, meaning we can bound the subtracted term as $\sup (-L_1) \leq \sup |L_1|$.
    \begin{align}
        \mathcal{R}_{S}(\mathcal{F_A}) &= \frac{1}{m} \mathbb{E}_{\bm{\sigma}} \left[ \sup \left( L_1 + L_2 \right) \right] \nonumber\\
        &\geq \frac{1}{m} \mathbb{E}_{\bm{\sigma}} \left[ \sup L_2 \right] - \frac{1}{m} \mathbb{E}_{\bm{\sigma}} \left[ \sup |L_1| \right] \nonumber\\
        &\geq - \frac{1}{m} \mathbb{E}_{\bm{\sigma}} \left[ \sup |L_1| \right] + \frac{1}{m} \mathbb{E}_{\bm{\sigma}} \left[ \sup L_2 \right]  \nonumber\\
    \end{align}
    
    From the Lipschitzianity of $\bar{f}_\theta$ and the $\varepsilon$-ball covering of each centroid, we know that $|\bar{f}_\theta(C_j) - \bar{f}_\theta(\hat C_k)| \leq \lambda \bar d(C_j, \hat{C}_k) \leq \lambda \varepsilon$. Substituting this absolute value upper bound back into the inequality with $L_1$, we obtain:
    \begin{align}
        \mathcal{R}_{S}(\mathcal{F_A}) &\geq - \frac{1}{m} \mathbb{E}_{\bm{\sigma}} \left[ \sup |L_1| \right] + \frac{1}{m} \mathbb{E}_{\bm{\sigma}} \left[ \sup L_2 \right]  \nonumber\\
        &\geq - \frac{1}{m} \mathbb{E}_{\bm{\sigma}} \left[ \sup \left| \sum_{k=1}^{\bar p_\varepsilon}\sum_{j\in \bar I^\varepsilon_k} \sum_{i\in I_j} \sigma_i (\bar{f}_\theta(C_j) - \bar{f}_\theta(\hat C_k)) \right| \right] + \frac{1}{m} \mathbb{E}_{\bm{\sigma}} \left[ \sup L_2 \right]  \nonumber\\
        &\geq - \frac{1}{m} \mathbb{E}_{\bm{\sigma}} \left[ \sup \sum_{k=1}^{\bar p_\varepsilon}\sum_{j\in \bar I^\varepsilon_k} \sum_{i\in I_j}  \lambda \varepsilon \right] + \frac{1}{m} \mathbb{E}_{\bm{\sigma}} \left[ \sup L_2 \right]  \nonumber\\
        &\geq  - \lambda \varepsilon + \frac{1}{m} \mathbb{E}_{\bm{\sigma}} \left[ \sup L_2 \right] \nonumber\\
    \end{align}
    
    If we similarly let $Z_k=\sum_{j\in \bar I^\varepsilon_k}  \sum_{i \in I_j} \sigma_i = \sum_{i \in U_k} \sigma_i $, where $U_k:=\bigcup_{j\in \bar I^\varepsilon_k} I_j$, then $L_2$ is the inner sum:  $\,\sum_{k=1}^{\bar p_\varepsilon}\bar{f}_\theta(\hat{C}_k)Z_k$.
   Assuming $\mathcal{F}_{\mathcal{A}}$ contains maximally expressive functions matching the bounding invariant, then the function that maximizes the sum is the one whose output perfectly aligns with the sign of the Rademacher noise sum, i.e. $\bar{f}_\theta(\hat{C}_k) = \text{sign}(Z_k)$. Using this and the linearity of expectation, we obtain:
    \begin{align}
        \frac{1}{m} \mathbb{E}_{\bm{\sigma}} \left[ \sup L_2 \right] &= \frac{1}{m} \mathbb{E}_{\bm{\sigma}} \left[ \sum_{k=1}^{\bar p_\varepsilon} |Z_j| \right] \nonumber\\
        &= \frac{1}{m} \mathbb{E}_{\bm{\sigma}} \left[ \sum_{k=1}^{\bar p_\varepsilon} |\sum_{i \in U_k} \sigma_i| \right]\nonumber\\
        &= \frac{1}{m} \sum_{k=1}^{\bar p_\varepsilon} \mathbb{E}_{\bm{\sigma}} \left[ |\sum_{i \in U_k} \sigma_i| \right] \nonumber\\
    \end{align}
    Using Khintchine's inequality, we can bound the expected value \cite{haagerup1981best} as:
    \begin{align}
        \mathbb{E}_{\bm{\sigma}} \left[ |\sum_{i \in U_k} \sigma_i| \right] &\geq \frac{1}{\sqrt{2}}\sqrt{\sum_{i \in U_k} 1^2} \nonumber\\
        &\geq \sqrt{\frac{|U_k|}{2}} \nonumber\\
    \end{align}
    Substituting this back into the main expression yields the final lower bound:
    \begin{align*}
        \mathcal{R}_{S}(\mathcal{F_A}) &\geq - \lambda \varepsilon + \frac{1}{m} \mathbb{E}_{\bm{\sigma}} \left[ \sup L_2 \right] \\
        &= - \lambda \varepsilon + \frac{1}{m} \sum_{k=1}^{\bar p_\varepsilon} \mathbb{E}_{\bm{\sigma}} \left[ |\sum_{i \in U_k} \sigma_i| \right] \\
        &\geq - \lambda \varepsilon + \frac{1}{m} \sum_{k=1}^{\bar p_\varepsilon}  \sqrt{\frac{|U_k|}{2}} \\
    \end{align*}

\end{proof}

\end{appendices}

\end{document}